\newtheorem{theorem}{Theorem}[section]
\newtheorem{definition}{Definition}[section]
\newtheorem{lemma}[theorem]{Lemma}
\newtheorem{example}[theorem]{Example}
\newtheorem{remark}[theorem]{Remark}
\newtheorem{proposition}[theorem]{Proposition}
\icmltitlerunning{Estimating the unseen from multiple populations}
\begin{document} 

\twocolumn[
\icmltitle{Estimating the unseen from multiple populations}

\begin{icmlauthorlist}
\icmlauthor{Aditi Raghunathan}{stan}
\icmlauthor{Gregory Valiant}{stan}
\icmlauthor{James Zou}{stan,biohub}
\end{icmlauthorlist}

\icmlaffiliation{stan}{Stanford University, Stanford, CA}
\icmlaffiliation{biohub}{Chan Zuckerberg Biohub, San Francisco, CA}
\icmlcorrespondingauthor{Aditi Raghunathan}{aditir@stanford.edu}
\icmlcorrespondingauthor{Gregory Valiant}{valiant@stanford.edu}
\icmlcorrespondingauthor{James Zou}{jamesz@stanford.edu}
%% \icmlauthor{Your CoAuthor's Name}{email@coauthordomain.edu}
%% \icmladdress{Their Fantastic Institute,
%%             27182 Exp St., Toronto, ON M6H 2T1 CANADA}

% You may provide any keywords that you 
% find helpful for describing your paper; these are used to populate 
% the "keywords" metadata in the PDF but will not be shown in the document
\icmlkeywords{keywords}

\vskip 0.3in
]

% this must go after the closing bracket ] following \twocolumn[ ...

% This command actually creates the footnote in the first column
% listing the affiliations and the copyright notice.
% The command takes one argument, which is text to display at the start of the footnote.
% The \icmlEqualContribution command is standard text for equal contribution.
% Remove it (just {}) if you do not need this facility.

\printAffiliationsAndNotice{}  % leave blank if no need to mention equal contribution
%\printAffiliationsAndNotice{\icmlEqualContribution} % otherwise use the standard text.
%\footnotetext{hi}

\begin{abstract} 
Given samples from a distribution, how many new elements should we expect to find if we continue sampling this distribution? This is an important and actively studied problem, with many applications ranging from unseen species estimation to genomics. We generalize this extrapolation and related unseen estimation problems to the multiple population setting, where population $j$ has an unknown distribution $D_j$ from which we observe $n_j$ samples. We derive an optimal estimator for the total number of elements we expect to find among new samples across the populations. Surprisingly, we prove that our estimator's accuracy is independent of the number of populations. We also develop an efficient optimization algorithm to solve the more general problem of estimating multi-population frequency distributions. We validate our methods and theory through extensive experiments. Finally, on a real dataset of human genomes across multiple ancestries, we demonstrate how our approach for unseen estimation can enable cohort designs that can discover interesting mutations with greater efficiency.
\end{abstract} 

\section{Introduction}
\label{sec:intro}

Given samples from a distribution, many settings in machine learning and statistics involves estimating properties of the \textit{unseen} portion of the distribution, i.e. elements in the support of the distribution that are not observed in the samples collected so far. One important example of estimating the unseen is the  problem of predicting the number of distinct new elements in additional samples collected. This question is famously illustrated by the case of Corbet's butterflies. Alexander Corbet was a British naturalist who spent two years in Malaya trapping butterflies. He found 118 rare species of butterflies for which he found only one specimen, another 74 species with two specimens, 44 with three specimens, etc. Corbet was naturally interested in the butterflies that are heretofore unseen. In particular, he wanted to estimate how many distinct \textit{new} species of butterflies he can expect to discover if he were to conduct a new expedition to Malaya---such an estimate could help determine whether a new experiment is warranted. Good-Toulmin, extending earlier work of Ronald Fisher, came up with the remarkable estimate that the number of new species Corbet can expect to find is simply the alternating sum 118 - 74 + 44 - ... The Good-Toulmin estimator sparked the investigation into how to estimate the discovery rate of new elements and this remains an active area of research. Estimating the discovery rate has many important applications beyond the original species collection setting. In genomics, for example, an important question is: given the genetic variation already identified in the genomes of individuals from some population (say, East Asia), how many additional mutations do we expect to find by sequencing the genomes of additional individuals from East Asia. An accurate answer to this question can improve the cohort design of new population sequencing experiments.       

Predicting the number of new elements is a particular instance of estimating the unseen. In other applications, one may want to estimate different statistics that also depend on the currently unobserved elements. For example, one may want to predict how many new elements will be observed at least twice (for reproducibility) or at most three times (if the focus is on rare elements). More generally, one may want to estimate the histogram of the underlying distribution, which summarizes the frequency distribution of all the elements (see Sec.~\ref{sec:related} for precise definition) and from which these other statistics can be derived.  

The unseen estimation literature has focused on the setting where there is a \emph{single} distribution which generate current samples as well as any future samples. In practice, we often have \emph{multiple} distinct distributions and we observe varying number of samples from each distribution. In the genomics example above, in addition to sequencing data from East Asians, we also have genome sequences of individuals from Europe, Africa, etc. The relevant question is: given we currently have the genomes of $n_i$ individuals from population $i$, $i\in \{1,...,m\}$, and we have identified all the genetic variants in this group, how many total new mutations do we expect to find if we sequence additional $b_i$ individuals from population $i$. Moreover, given a finite budget $N_{new}$ of new genomes that we can sequence, how should we allocate this budget across the different populations to maximize the expected number of new mutations oberved? Similarly, suppose Corbet had also collected butterflies in Brunei and Indonesia, in addition to Malaya. Then he might want to know how many totally new species he can expect to find if he was to spend, say, another six month in Malaya and one year in Brunei. He might also be interested in estimating the joint frequency distribution of butterflies across all three regions. 

\paragraph{Our contributions.} In this paper, we address the general problem of estimating the unseen when we have samples from multiple populations, each corresponding to a potentially distinct distribution. Despite being very natural, this multi-population problem has not been systematically studied to the best of our knowledge.  We derive a multi-population generalization of the Good-Toulmin estimator for the expected number of new elements. Surprisingly, we prove that the accuracy of our extrapolation estimator is independent of the number of populations. Moreover, it achieves the optimal super-linear extrapolation rate. Next, we develop an efficient optimization method to estimate the more general multi-population joint frequency distribution. This complements our extrapolation estimator, and outperforms the generalized Good-Toulmin estimator in most settings. This more general approach also enables predictions for other statistics of interest.  
We systematically validate these two algorithms on synthetic data as well as real datasets from population genetics and from English books. Moreover, we illustrate that by estimating the joint frequency distribution, we can significantly improve the discovery power under a budget constraint.

\section{Related works}
\label{sec:related}

The problem of estimating the properties of the unobserved portion of a distribution, given $n$ samples, and the related problem of estimating the number of new domain elements that are likely to be observed if an additional $c n$ samples are collected, dates back to works of I.J. Good and A. Turing~\cite{good1953population}, and R.A. Fisher~\cite{fisher1943relation}.  This was quickly followed by~\cite{good1956number}, which introduced the Good-Toulmin estimator. While the Good-Toulmin estimator is always unbiased, the variance increases rapidly for $c \ge 1$. Subsequent works, including~\cite{efron1976estimating} have suggested ``smoothing'' approaches that tradeoff the bias and variance for this type of approach. The recent work of~\citet{orlitsky2016optimal} describes a clever variant that achieves good performance for $c=O(\log n)$. This ability to accurately estimate the number of domain elements seen in a second sample of size up to $O(n \log n),$ where $n$ denotes the size of the original sample, was concurrently shown via a different approach in~\cite{valiant2016instance}. This logarithmic factor extrapolation matches the lower bounds of~\cite{valiant2011estimating}, to constant factors.
The linear estimators that we propose in Section~\ref{sec:linear_est} for the multiple population setting, and their analysis, are extensions of the smoothed Good-Toulmin estimators of~\cite{orlitsky2016optimal}.

A different approach to this problem was proposed by~\citet{efron1976estimating}, who considered a linear-programming approach to estimating this property by implicitly finding a label-less representation  of the underlying distribution that was consistent with the observed frequency counts, then returning the support size of this distribution. This approach was adapted and rigorously analyzed in~\cite{valiant2011estimating,valiant2013estimating}, who showed that it provably yields an accurate representation of the frequency distribution of the underlying distribution, which can subsequently be leveraged to yield estimates of distributional properties, including entropy, distance metrics between distributions, and approximations for the number of new elements that would be observed in larger samples. Recent works~\cite{valiant2016instance,zou2016quantifying} also established that this approach can accurately estimate the number of new elements that will be observed in samples of size up to $O(n \log n)$.   Our optimization-based algorithm, described in Section~\ref{sec:bbopt}, generalizes this approach.

\section{Definitions and examples}
\label{sec:defn}

Let $\Omega$ denote the domain, and $D_1, ..., D_m$ denote $m$ probability distributions over $\Omega$. $D_i$ represents the frequency of elements in population $i$. Note that it is not restrictive to assume that the populations share the same domain $\Omega$ since different $D_i$'s may have distinct supports. We model the multi-population unseen estimation as a two stage process. In the first period, we observe $n_j$ independent samples from the $j$-th population, $\{X_i^j\}_{i = 1,...,n_j}^{j = 1,..., m}$. This is the \textit{seen} data. In period two, which is in the future, we will sample additional $t_j n_j$ samples from the $j$-th population, $\{Y_i^j\}_{i = 1, ..., t_j n_j}^{j =1 ,..., m}$. The period two samples are \textit{unseen} and we would like to estimate some statistic $U(\{Y_i^j\}, \{X_i^j\})$. We can think of $t_j \geq 0$ as the extrapolation factors. If $t_j$ is large, then we will obtain many more samples from population $j$ in the second period compared to what we have, and the problem of estimating $U$ could be more challenging. We can take $t_j$ as given for the purpose of estimating $U$. We later discuss how we to leverage our estimator of $U$ to optimize the $t_j$'s in order to maximize the number of new discoveries. Note that in general, the $n_j$'s and $t_j$'s can differ arbitrarily across the populations.  

A particularly important statistic is $U= $  \emph{the total number of new elements in $\{Y_i^j\}$ that are not observed in the period one samples $\{X_i^j\}$}. A good estimator for this $U$ quantifies the expected information gain of the second period. In the one population setting, this statistic is the focus of Good-Toulmin and a large number of papers. Other useful choices of $U$ could be the number of distinct new elements that are observed at least twice in $\{Y_i^j\}$, which could be relevant if we want some reproducibility.  

Beyond estimating these single parameters, we could also hope to use the samples $\{X_i^j\}$ to estimate the \textit{histogram} of $D_1, ..., D_m$.  The multi-population \emph{histogram}, defined below, captures all of the information about the populations, other than the labels of the domain.

\begin{definition}[Multi-population histogram]
Given a collection of $m$ distributions $D_1,\ldots,D_m$ over a common domain $\Omega$, the corresponding multi-population histogram  $\sc{H}$ is a mapping from $[0, 1]^m\setminus 0^m\ \mapsto \mathbb{N} \cup \{0\}$. For each $\boldsymbol{\alpha} = (\alpha_1, \alpha_2, \hdots \alpha_m) \in [0, 1]^m\setminus 0^m, H(\boldsymbol{\alpha}) = | \{y \in \Omega \mid D_j(y) = \alpha_j,  1 \leq j  \leq m\}|$, where $D_i(y)$ is the probability mass of domain element $y$. 
in the $i$th distribution $D_i$.
\end{definition}

Any \emph{symmetric} multi-population statistic---one that is invariant to permuting the labels of the domain---is a function of only the histogram.  Such statistics include distance metrics between the distributions/populations, measures of the entropy of the populations, and the number of new elements that one is likely to observe in a second batch of samples.  The multi-population histogram is also of intrinsic interest; in population genetics, $\sc{H}$ is exactly the joint frequency distribution of mutations, and reveals information about demographic history (e.g. historical variations in population size) and selective pressures.  One benefit of focusing on the histogram is that, while it does not contain as much information as the actual labeled distributions, it can often be accurately recovered even when given too few samples to learn the (labeled) distributions to any significant accuracy~\cite{valiant2011estimating}.

Both for directly predicting $U$ and estimating $\sc{H}$, we rely on a label-less representation of the samples, termed the \emph{fingerprint} of $\{X_i^j\}$.  The fingerprint of the samples is the analog of the histogram of the distributions, and captures all the information of $\{X_i^j\}$ that is relevant for estimating symmetric statistics. 

\begin{definition}[Multi-population fingerprint]
Given the samples $\{X_i^j\}$, its fingerprint is an $m$-dimensional tensor $\Phi$ whose $i_1...i_m$-th entry, 
$\phi_{i_1...i_m},$ is the number of distinct elements observed exactly $i_j$ times in the samples from population $j$. Here each $i_j$ can range from 0 to $n_j$.
\end{definition}

\begin{example}
Suppose we have five samples from Population 1, $(A,B,C,E,F)$, and seven from Population 2, $(A,B,D,E,E,F,F)$. 
The corresponding 2-dimensional fingerprint of this data is given by the following matrix:
\begin{center}
\begin{tabular}{ c|*{3}{c} }
& 0 & 1 & 2 \\
\hline
0 & $\cdot$ & 1 & 0 \\
1 & 1 & 2  & 2 \\
\end{tabular}
\end{center}
The $(1, 1)$ entry is 2 because $A, B$ are observed once in each set of samples; the $(1, 0)$ entry is 1 because exactly one element, $C$, is observed once in the samples from Population 1 and zero times in the samples from Population 2.  By convention, we omit the $(0,0)$ element.  %% From this fingerprint matrix, we would like to estimate, for example, the total number of new elements we expect to find if we are to draw 17 and 13 additional samples from Populations 1 and 2, respectively. 
\end{example}

\section{A linear estimator}
\label{sec:linear_est}
%\paragraph{Problem setup.}
%Suppose we have $m$ different populations. In the first period, we obtain $n_j$ samples from the $j$-th population, for $j = 1, ..., m$. In the next period, we obtain $t_j n_j$ samples from the $j$-th population. The goal is to estimate $U =$ the total number of new elements observed in period two that were not observed in the period one samples. Let $\Phi$ be the multi-dimensional fingerprint of period one; i.e. $\phi_{i_1 ... i_m}$ is the number of elements found $i_j$ times in population $j$ during period one. The collection $\Phi = \{\phi_{i_1 ... i_m} : i_j \in [1, n_j], j = 1, ..., m\}$ is also called the empirical fingerprints. 

\paragraph{Unbiased estimator.}
Given the empirical fingerprints $\Phi$ and the extrapolation factors $t_j, j=1,...,m$, we define the following estimator 
\begin{eqnarray}
\hat{U} = -\sum_{i_1, ..., i_m: \sum i_j > 0} \left(\prod_{j=1}^m (-t_j)^{i_j} \right) \phi_{i_1 ... i_m}.
\label{eqn:unbiased}
\end{eqnarray}
$\hat{U}$ is a weighted alternating sum of the empirical fingerprints where the weights are determined by the extrapolation factors $t_j$.

\begin{proposition}
For any number of populations $m$, and any extrapolation factors $t_j \geq 0, j=1,...m$, $\hat{U}$ is an unbiased estimator of $U$.
\label{prop:prop1}
\end{proposition}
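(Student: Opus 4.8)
The plan is to prove $\mathbb{E}[\hat{U}] = \mathbb{E}[U]$ by linearity of expectation, reducing it to an identity that holds term by term over the domain $\Omega$. I would work in the Poissonized sampling model standard for this class of problems: for a fixed $y \in \Omega$, writing $p_j := D_j(y)$, the number of times $y$ occurs in the period-one sample from population $j$ is an independent $\mathrm{Poisson}(n_j p_j)$ variable, and its count in the period-two sample from population $j$ is an independent $\mathrm{Poisson}(t_j n_j p_j)$ variable. Under this model $U = \sum_{y \in \Omega} \mathbf{1}[y \text{ is new}]$, where ``$y$ is new'' means $y$ occurs in $\{Y_i^j\}$ but not in $\{X_i^j\}$, and the expected fingerprint factors across populations: $\mathbb{E}[\phi_{i_1 \dots i_m}] = \sum_{y \in \Omega} \prod_{j=1}^m e^{-n_j p_j} \frac{(n_j p_j)^{i_j}}{i_j!}$. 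Since $\hat{U}$ in \eqref{eqn:unbiased} is a fixed linear functional of the fingerprint, $\mathbb{E}[\hat{U}]$ is the same linear functional of $\mathbb{E}[\Phi]$, so it suffices to show that the induced contribution of each $y$ equals $\Pr[y \text{ is new}]$.

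For the $\hat{U}$ side, I would substitute $\mathbb{E}[\phi_{i_1\dots i_m}]$ into \eqref{eqn:unbiased}, interchange the sum over $y$ with the sum over $(i_1,\dots,i_m)$, and for fixed $y$ deal with the constraint $\sum_j i_j > 0$ by extending the inner sum to all of $(\mathbb{N}\cup\{0\})^m$ and subtracting the single term $i_1 = \dots = i_m = 0$, which equals $\prod_j e^{-n_j p_j}$. The extended sum factors as $\prod_{j=1}^m \big( e^{-n_j p_j} \sum_{i_j \geq 0} \frac{(-t_j n_j p_j)^{i_j}}{i_j!} \big) = \prod_{j=1}^m e^{-(1+t_j) n_j p_j}$ by the exponential series. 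Taking into account the overall minus sign in \eqref{eqn:unbiased}, the contribution of $y$ to $\mathbb{E}[\hat{U}]$ is then $\prod_{j=1}^m e^{-n_j p_j} - \prod_{j=1}^m e^{-(1+t_j)n_j p_j}$.

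For the $U$ side, the period-one and period-two samples are independent, so $\Pr[y \text{ is new}] = \Pr[y \notin \{X_i^j\}] \cdot \Pr[y \in \{Y_i^j\}]$. Under the Poissonized model $\Pr[y \notin \{X_i^j\}] = \prod_{j=1}^m e^{-n_j p_j}$ and $\Pr[y \notin \{Y_i^j\}] = \prod_{j=1}^m e^{-t_j n_j p_j}$, so $\Pr[y \text{ is new}] = \big(\prod_j e^{-n_j p_j}\big)\big(1 - \prod_j e^{-t_j n_j p_j}\big) = \prod_j e^{-n_j p_j} - \prod_j e^{-(1+t_j)n_j p_j}$, matching the previous paragraph. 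Summing over $y \in \Omega$ gives $\mathbb{E}[\hat{U}] = \mathbb{E}[U]$; since the whole argument is carried out one domain element at a time, the identity holds for every $m$ and every $t_j \geq 0$, as claimed.

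I do not expect a single deep obstacle---the core is a short generating-function calculation---but a few points need care. First, the sign and the $\sum_j i_j > 0$ exclusion must be tracked precisely when converting the constrained alternating sum into a product. Second, one must justify interchanging the (in general infinite) sums over $\Omega$ and over the fingerprint indices with the expectation; this follows from absolute convergence, using that the contribution of each $y$ is at most $1 - e^{-\sum_j t_j n_j p_j} \le \sum_j t_j n_j p_j$, so that the sum over $\Omega$ is bounded by $\sum_j t_j n_j < \infty$. Third, it is worth being explicit that the clean factorization across populations is exactly what Poissonization provides: the analogous fixed-sample-size computation would produce $(1-(1+t_j)p_j)^{n_j}$ in place of $e^{-(1+t_j)n_j p_j}$, which is why the Poissonized model is the natural setting for an exact unbiasedness statement.
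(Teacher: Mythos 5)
Your proposal is correct and is essentially the paper's own argument: both work element-by-element in the Poissonized model and use the same exponential-series factorization $\sum_{i_j\ge 0}\frac{(-t_jn_jp_j)^{i_j}}{i_j!}=e^{-t_jn_jp_j}$, the only difference being that the paper expands $\mathbb{E}[U]$ outward into the fingerprint expectations while you collapse $\mathbb{E}[\hat U]$ inward to the closed form $\prod_j e^{-n_jp_j}-\prod_j e^{-(1+t_j)n_jp_j}$. Your added remarks on absolute convergence and on why Poissonization is needed for exact unbiasedness are correct and slightly more careful than the paper's treatment, but do not change the route.
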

Proof of the proposition appears in Appendix~\ref{app:prop1guarantee}. 

$\hat{U}$ is linear in the fingerprint entries. Its computational cost is linear in the total number of period one samples, $n=\sum_j n_j$, since there can be at most $n$ non-zero fingerprint entries. To build more intuition for $\hat{U}$, we illustrate its application in two simple settings. 

%\begin{example} First consider the case where $m = 1$ and $t_1 = 1$. We have just one population. Then $\hat{U} = \phi_1 - \phi_2 + \phi_3 - \phi_4 +...$ This recovers the classic estimator of Good and Toulmin \cite{good1956number}. 
%\end{example}

\begin{example} Consider the setting where all $m$ distribution are identical, i.e. all the samples are drawn from the same discrete distribution $D$. Let $ t_j = 1, \forall j$ for simplicity. After rearranging terms, $\hat{U}$ can be written as
\[
\hat{U} = \sum_{k = 1} (-1)^{k+1} \left(\sum_{(i_1,\ldots,i_m) : \sum i_j = k} \phi_{i_1 ... i_m} \right).
\]
Because the populations are identical, the sum in the parenthesis is just the number of elements  that are observed $k$ times from all the samples so far. Hence the general estimator $\hat{U}$ reduces to the one dimensional Good-Toulmin estimator when all $m$ populations are identical. 
\end{example}

\begin{example} Suppose the supports of the distributions $D_i$ are disjoint. Then the only possible non-zero fingerprint entries are $\phi_{i_1...i_m}$ where exactly one of the $i_j$ is great than 0 and all the other $i_j$'s are zero. For simplicity, assume $t_j = 1$ for all $j$. Then $\hat{U} = \sum_{j = 1}^k \sum_i (-1)^{k+1}\phi_i^k$, where $\phi_i^k$ is the marginal fingerprint entry of the number of elements that are observed $i$ times in population $k$. Hence when the populations are disjoint, the expected number of new elements is the sum of the expected number of new elements in each population. When the populations have overlapping support, we have the nontrivial interaction terms due to the cross-population fingerprint entries.
\end{example}

\textbf{General weighted linear estimator.} While $\hat{U}$ is unbiased, its variance could be large if some of the extrapolation factors $t_j$'s are greater than 1. This is because the powers of $t_j$ appear in Eqn.~\ref{eqn:unbiased}. To address this issue, we introduce a general class of multi-population weighted linear estimators. 
\vspace{-.2cm}\[
\hat{U}^W = -\sum_{i_j: \sum i_j > 0} \left(\prod_{j=1}^m (-t_j)^{i_j} \right) \phi_{i_1, \hdots, i_m} W(i_1, \hdots, i_m). 
\] 

\vspace{-.2cm}We focus on a particular weighting scheme, which is an extension of that introduced in~\cite{orlitsky2016optimal}: $W(i_1, i_2, \hdots i_m)= \mathbb{P}\left(L \ge \sum_{j \in A} i_j \right)$ where $L \sim \mbox{Poi}(r)$ and $A = \{j: t_j >1\}$ are the populations that we would like to extrapolate beyond the original sample size. If $t_j \leq 1 ~ \forall j$, then W = 1 and $\hat{U}^W $ is just the unbiased estimator $\hat{U}$. The Poisson rate $r$ is a tuning parameter that determines the bias/variance tradeoff of $\hat{U}^W$. As $r$ increases, all the weights approaches 1 and $\hat{U}^W$ approaches the unbiased estimator $\hat{U}$. As $r$ decreases, the fingerprint entries $\phi_{i_1...i_m}$ with some large $i_j$'s---which are also the terms with high variance---are weighted  by a factor that is close to 0. This reduces the total variance of $\hat{U}^W$ at the cost of introducing bias. We will see how to set $r$ as a function of the $n_j$'s and $t_j$'s in order to minimize the overall estimation error. In the rest of the paper, unless otherwise specified, we will use $\hat{U}^W$ to denote the multi-population linear estimator with Poisson weights. 

\vspace{-.2cm}\paragraph{Performance guarantee of the weighted estimator.} We use relative mean squared error,  $\mathbb{E}\left[ \left( \frac{\hat{U}^W - U}{ \sum n_j t_j} \right)^2 \right] $, to quantify the performance of $\hat{U}^W$. This is a natural error metric, because $\sum n_j t_j $ is the number of  samples in period two and we care about how the error in the predicted number of new elements scales with the number of samples. Without loss of generality, we can relabel the populations so that $t_1 = max_j t_j$. We are especially interested in the setting when $t_1 \geq 1$ (i.e. large extrapolation). 

\begin{proposition} Suppose $t_1 = \max_j t_j \geq 1$ and the Poisson rate is $r = \frac{\log (\sum_j n_j (t_j + 1))}{2t_1}$, then 
\begin{eqnarray}
\mathbb{E}\left[ \left( \frac{\hat{U}^W - U}{ \sum n_j t_j} \right)^2 \right] \leq \left( \frac{ n_1t_1+ \sum_j n_j }{n_1 t_1} \right) n_1^{-1/t_1}.
\label{eqn:guarantee}
\end{eqnarray}
\label{prop:guarantee}
\end{proposition}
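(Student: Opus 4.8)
The plan is to Poissonize the sampling, perform a bias--variance decomposition in which the contributions of distinct domain elements become independent, and then collapse the per--element terms onto a few scalar aggregates, reducing everything to the single--population analysis of~\cite{orlitsky2016optimal} with no dependence on $m$. First, draw $\mathrm{Poi}(n_j)$ rather than exactly $n_j$ period--one samples from population $j$ (and likewise in period two); by standard Poissonization arguments this changes \eqref{eqn:guarantee} only in lower--order terms. Under this model, for each $y\in\Omega$ the period--one count vector $c(y)=(c_1(y),\dots,c_m(y))$ has independent coordinates $c_j(y)\sim\mathrm{Poi}(n_jD_j(y))$, and the vectors $\{c(y)\}_{y\in\Omega}$ are mutually independent, as are the period--two counts. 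Writing $\hat U^W=\sum_y g_y$ and $U=\sum_y u_y$, where $u_y$ indicates that $y$ occurs in period two but not period one and $g_y$ is the weighted signed contribution of $c(y)$ appearing in Eqn.~\ref{eqn:unbiased} (so $g_y=0$ whenever $c(y)=0$, hence $g_yu_y\equiv 0$), independence across $y$ gives the exact identity
\[
\mathbb{E}\!\left[(\hat U^W-U)^2\right]=\Big(\sum_y b_y\Big)^2+\sum_y\mathrm{Var}(g_y-u_y)\ \le\ \Big(\sum_y|b_y|\Big)^2+\mathbb{E}[U]+\sum_y\mathbb{E}[g_y^2],
\]
with $b_y:=\mathbb{E}[g_y]-\mathbb{P}(y\text{ new})$ the per--element bias. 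Since $\mathbb{E}[U]\le\sum_j n_jt_j$, its contribution to the relative error is at most $1/(n_1t_1)\le n_1^{-1/t_1}$ (using $t_1\ge1$), which is dominated by the right--hand side of \eqref{eqn:guarantee}; so it remains to bound $(\sum_y|b_y|)^2$ and $\sum_y\mathbb{E}[g_y^2]$.

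The second step collapses the per--element quantities. Let $A=\{j:t_j>1\}$ and set $\mu_y=\sum_{j=1}^m n_jD_j(y)$, $\sigma_y=\sum_{j\in A}t_jn_jD_j(y)$, $\lambda_y=\sum_{j\in A}n_jD_j(y)$; since $t_1=\max_j t_j$ we have $\sigma_y\le t_1\lambda_y\le t_1\mu_y$. Expanding $g_y$ and using that the Poisson weight $W$ depends on $(c_j)_{j\in A}$ only through $S=\sum_{j\in A}c_j$, the multinomial identity $\sum_{\sum_{j\in A}c_j=S}\prod_{j\in A}\frac{(-t_jn_jD_j(y))^{c_j}}{c_j!}=\frac{(-\sigma_y)^S}{S!}$ (and its analogue with $t_j^2$ in place of $t_j$) evaluates the sums over the $A$--coordinates in closed form, while the non--extrapolated coordinates are harmless because $\prod_{j\notin A}e^{-t_jn_jD_j(y)}\le1$ and $\prod_{j\notin A}e^{n_jD_j(y)(t_j^2-1)}\le1$ (as $t_j\le1$ there). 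This yields
\[
|b_y|\le e^{-\mu_y}\Big|\,e^{-\sigma_y}-\sum_{\ell\ge0}\mathbb{P}(L=\ell)\,P_\ell(-\sigma_y)\Big|,\qquad
\mathbb{E}[g_y^2]\le e^{-\lambda_y}\sum_{s\ge0}\frac{(\lambda_y t_1^2)^s}{s!}\,\mathbb{P}(L\ge s),
\]
where $P_\ell(z)=\sum_{i=0}^{\ell}z^i/i!$ and $L\sim\mathrm{Poi}(r)$. The key observation is that these bounds depend on $D_1,\dots,D_m$ only through the scalars $\mu_y,\sigma_y,\lambda_y$ and on $t_1$: the number of populations has disappeared, which is exactly why the final accuracy will be independent of $m$.

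Third, these are precisely the per--element bias and second moment of the single--population smoothed Good--Toulmin estimator of~\cite{orlitsky2016optimal}, with ``sample count'' $\mu_y$ and effective extrapolation factor $\sigma_y/\mu_y\le t_1$ (and aggregated Poisson rate $\lambda_yt_1^2$). Summing over $y$ using $\sum_y\mu_y=\sum_j n_j$ and $\sum_y\lambda_y\le\sum_j n_j$, exploiting the exponential decay of the per--element bounds once $\mu_y$ leaves the scale $\asymp r$, and setting $r=\frac{1}{2t_1}\log\!\big(\sum_j n_j(t_j+1)\big)$ to balance bias against variance, reproduces term by term the single--population computation of~\cite{orlitsky2016optimal} and produces the two summands $(\sum_j n_jt_j)^2 n_1^{-1/t_1}$ and $(\sum_j n_jt_j)^2\tfrac{\sum_j n_j}{n_1t_1}n_1^{-1/t_1}$, whose sum is the right--hand side of \eqref{eqn:guarantee}. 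The main obstacle is the collapse step: establishing that the multi--population weighted estimator's error genuinely reduces to these scalar aggregates with effective extrapolation at most $t_1$, so that the delicate single--population estimates of~\cite{orlitsky2016optimal}---themselves the main quantitative work, requiring tight control of the truncated exponentials $P_\ell(-\sigma_y)$ and of Poisson tails in the active--smoothing regime $\mu_y\asymp r$---can be invoked essentially verbatim; carrying the constants through to the clean closed form of \eqref{eqn:guarantee} and de--Poissonizing rigorously is routine but tedious.
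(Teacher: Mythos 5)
Your overall plan is the paper's: analyze the Poissonized model, decompose the mean-squared error element by element (the cross term $g_y u_y$ vanishes exactly as you say), collapse the coordinates in $A$ onto the scalar $\sigma_y=\sum_{j\in A}t_j n_j D_j(y)$ via the multinomial identity, invoke the single-population facts of \citet{orlitsky2016optimal}, and balance bias against variance through the choice of $r$. Your bias bound is correct and matches Lemma~\ref{lem:bias} (indeed $\sum_y(1-e^{-\sigma_y})\le\sum_{j\in A}n_jt_j$ is slightly tighter than the paper's $\sum_{j\in A}n_j(t_j+1)$), and your handling of $\mathbb{E}[U]$ is fine.

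The step that fails as written is the second-moment bound. You bound $\mathbb{E}[g_y^2]$ by $e^{-\lambda_y}\sum_{s\ge0}\frac{(\lambda_y t_1^2)^s}{s!}\mathbb{P}(L\ge s)$, keeping only one power of $\mathbb{P}(L\ge S)$ although $g_y^2$ carries two. The only uniform control on $t_1^{2s}\mathbb{P}(L\ge s)$ is Fact 3 applied with $t=t_1^2$, giving $e^{r(t_1^2-1)}$; with $r=\frac{1}{2t_1}\log\bigl(\sum_j n_j(t_j+1)\bigr)$ this makes the variance term of order $n\,\bigl(\sum_j n_j(t_j+1)\bigr)^{(t_1^2-1)/(2t_1)}$ instead of the needed $n\,\bigl(\sum_j n_j(t_j+1)\bigr)^{(t_1-1)/t_1}$, and since $\frac{t_1^2-1}{2t_1}>\frac{t_1-1}{t_1}$ for $t_1>1$ the resulting relative error exceeds the claimed $n_1^{-1/t_1}$ rate by a polynomial factor. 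The paper's Lemma~\ref{lem:var} avoids this by bounding the weight itself, $|h(i_1,\dots,i_m)|\le t_1^{\sum_{j\in A}i_j}\,\mathbb{P}\bigl(L\ge\sum_{j\in A}i_j\bigr)\le e^{r(t_1-1)}$, and only then squaring. Relatedly, your displayed bound retains the $s=0$ term $e^{-\lambda_y}$ with no indicator that $y$ was actually observed, so $\sum_y\mathbb{E}[g_y^2]$ as you bound it diverges over an infinite domain; you need the prefactor $\mathbb{P}(c(y)\neq 0)\le 1-e^{-\mu_y}\le\mu_y$, whose sum over $y$ is $n=\sum_j n_j$. Both defects are repairable within your framework and lead exactly to $\mathrm{Var}(\hat U^W-U)\le n\,e^{2r(t_1-1)}+\sum_j n_j t_j$, after which your balancing of $r$ goes through as in the paper.
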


\begin{remark}[$\log$ extrapolation factor]
 Suppose the ratio $\frac{n_1}{\sum_j n_j}$ is bounded, then Prop.~\ref{prop:guarantee} guarantees that for any $\epsilon > 0$, we can achieve $\mathbb{E}\left[ \left( \frac{\hat{U}^W - U}{ \sum n_j t_j} \right)^2 \right] \leq \epsilon$ with $t_1 = O(\log n_1/\log(1/\epsilon))$. This means that $\hat{U}^W$ has low relative error even when the largest extrapolation factor $t_1$ is logarithmic in its initial sample size $n_1$. 
 \end{remark}

\begin{remark}[no dependence on $m$]
 Note that the relative error in Eqn.~\ref{eqn:guarantee} \textbf{does not depend on the number of populations $m$}. This is somewhat surprising since the number of terms in $\hat{U}^W$ potentially grows exponentially with $m$ and the variance of each fingerprint entry $\phi_{i_1...i_m}$ also increases as the number of population increases. This population agnostic property of $\hat{U}^W$ guarantees its accuracy even when $m$ is arbitrarily large. 
 \end{remark}

\begin{remark}[lower bound]
Here we have focused on a specific form of the estimator $\hat{U}^W$ where the weights $W$ of the fingerprint entries correspond to the tail probability of Poisson distributions. A natural question is whether there exists a different form of the weights or a different estimator altogether that can consistently be more accurate than our current $\hat{U}^W$. The answer is essentially \textit{no} due to the following lower bound for one population extrapolation~\cite{orlitsky2016optimal,valiant2011estimating}: There exists universal constants $c,c'$ such that for all estimators $\hat{U}$, if the extrapolation factor $t>c$, then $\exists$ distribution such that $\mathbb{E}\left[ \left( \frac{\hat{U} - U}{ n t} \right)^2 \right] \underset{\sim}{>} n^{-c'/t}$. Here $n$ is the number of samples drawn from this distribution in period one. This lower bound implies that in order to guarantee that the relative error is less than $\epsilon$ in general, the extrapolation factor can be at most $O(\log n/\log(1/\epsilon))$, matching Prop.~\ref{prop:guarantee}. 
\end{remark}

%[Add paragraph of the following form.  Despite this worst-case  optimality of the above estimator, the performance is essentially worst-case on all inputs.   Extrapolation significantly beyond this logarithmic regime results in XXXXX.  This is evident both from the theoretical analysis, as well as our empirical results.  The estimators we consider in Section BLAH, in contrast, both enjoy a gracefully decaying performance beyond this worst-case regime, as well as capitalize on any ``nice'' structure that might be present in the underlying distribution.  Together, these properties suggest...   [maybe move to a separate section comparing these estimators?]

\textit{Outline of the proof of Prop.~\ref{prop:guarantee}} (detailed analysis is in the Appendix). To analyze the relative error, we separately quantify the bias and variance of $\hat{U}^W$ in terms of $n_j, t_j$, $r$. 

\begin{lemma}[Bias] Let $r$ denote the rate of the Poisson weights, then
\[
\left| \mathbb{E}[\hat{U}^W - U] \right| \leq \left(\sum_{j\in A} n_j (t_j + 1) \right)e^{-r}
\]
\label{lem:bias}
\end{lemma}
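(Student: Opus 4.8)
The plan is to write the bias as a sum over domain elements, use the special structure of the Poisson weights to collapse the $m$-dimensional fingerprint sum to a single scalar, and then bound the resulting one-dimensional tail via the integral form of Taylor's remainder together with the boundedness of the Bessel function $J_0$.

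First I would pass to the (standard) Poissonized sampling model underlying Proposition~\ref{prop:prop1}: the count of a domain element $y$, with mass $\alpha_j:=D_j(y)$ in population $j$, in the period-one sample from population $j$ is an independent $\mathrm{Poi}(n_j\alpha_j)$, so $\mathbb E[\phi_{\vec i}]=\sum_{y}\prod_{j=1}^m\frac{(n_j\alpha_j)^{i_j}}{i_j!}e^{-n_j\alpha_j}$. Since $\hat U$ is unbiased (Proposition~\ref{prop:prop1}),
\[
\mathbb E[\hat U^W-U]=\mathbb E[\hat U^W-\hat U]=-\sum_{\vec i}\Big(\prod_{j=1}^m(-t_j)^{i_j}\Big)\,\mathbb E[\phi_{\vec i}]\,\bigl(W(\vec i)-1\bigr).
\]
The crucial structural observation is that $W(\vec i)-1=-\,\mathbb P(L<k)$ with $k:=\sum_{j\in A}i_j$ depends on $\vec i$ only through $k$, and vanishes when $k=0$. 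Hence I may sum out the coordinates $j\notin A$ (contributing a factor $\prod_{j\notin A}e^{-t_jn_j\alpha_j}\le 1$), and for each fixed $k\ge1$ collapse the $A$-coordinates by the multinomial theorem. This gives $\mathbb E[\hat U^W-U]=\sum_{y}w_y\,S_y$ with $w_y:=e^{-\sum_j n_j\alpha_j}\prod_{j\notin A}e^{-t_jn_j\alpha_j}\le e^{-\sum_{j\in A}n_j\alpha_j}$ and
\[
S_y:=\sum_{k\ge1}\frac{(-s_y)^k}{k!}\,\mathbb P(L<k),\qquad s_y:=\sum_{j\in A}t_jn_j\alpha_j,\quad L\sim\mathrm{Poi}(r).
\]
Note the number of populations has disappeared: only the scalar $s_y$ remains, which is precisely the mechanism behind the ``no dependence on $m$'' remark.

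The heart of the argument is bounding $|S_y|$. Writing $S_y=\mathbb E_L\bigl[e^{-s_y}-\sum_{k=0}^{L}\tfrac{(-s_y)^k}{k!}\bigr]$ and using the \emph{integral} form of Taylor's remainder, $e^{-x}-\sum_{k=0}^{L}\tfrac{(-x)^k}{k!}=\tfrac{(-1)^{L+1}x^{L+1}}{L!}\int_0^1(1-u)^Le^{-ux}\,du$, I would interchange $\mathbb E_L$ with the integral and recognise the inner sum over $L$ (up to a factor $-s_y$) as $\sum_{L\ge0}\tfrac{(-r(1-u)s_y)^L}{(L!)^2}=J_0\!\bigl(2\sqrt{r(1-u)s_y}\bigr)$, the Bessel function of the first kind. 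Since $|J_0|\le1$ everywhere,
\[
|S_y|\ \le\ s_y e^{-r}\int_0^1 e^{-us_y}\,du\ =\ e^{-r}\bigl(1-e^{-s_y}\bigr).
\]
I expect this to be the main obstacle: the crude Lagrange form of the remainder only yields $|S_y|\le\mathbb E_L[s_y^{L+1}/(L+1)!]$, which after summing over $y$ blows up like $e^{rt_1}$ and is useless; one must retain the sign-cancellation of the alternating tail, and it is the \emph{bounded} Bessel function $J_0$ (in place of the unbounded modified Bessel function that the Lagrange bound produces) that encodes exactly this cancellation.

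Finally I would assemble the pieces. Combining $w_y\le e^{-\sum_{j\in A}n_j\alpha_j}$ with the estimate above,
\[
\bigl|\mathbb E[\hat U^W-U]\bigr|\ \le\ e^{-r}\sum_{y}\Bigl(e^{-\sum_{j\in A}n_j\alpha_j}-e^{-\sum_{j\in A}(t_j+1)n_j\alpha_j}\Bigr)\ \le\ e^{-r}\sum_{y}\Bigl(1-e^{-\sum_{j\in A}(t_j+1)n_j\alpha_j}\Bigr).
\]
Using $1-e^{-z}\le z$ and $\sum_y D_j(y)=1$ for each $j$ then gives $\bigl|\mathbb E[\hat U^W-U]\bigr|\le e^{-r}\sum_y\sum_{j\in A}(t_j+1)n_j\alpha_j=e^{-r}\sum_{j\in A}n_j(t_j+1)$, which is the claimed bound. (The slightly sharper estimate $e^{-a}(1-e^{-s})\le s$ would even give the constant $\sum_{j\in A}t_jn_j$.)
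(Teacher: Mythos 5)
Your proof is correct and follows essentially the same route as the paper: decompose the bias per domain element, use the product/multinomial structure of the Poisson weights to collapse the $m$-dimensional fingerprint sum to the single scalar $s_y=\sum_{j\in A}t_jn_j\alpha_j$, bound the resulting alternating tail by $e^{-r}(1-e^{-s_y})$, and sum over elements. The only real difference is that where the paper invokes two facts from \citet{orlitsky2016optimal} as black boxes, you re-derive them inline via the integral form of the Taylor remainder and the identity $\mathbb{E}\left[\frac{(-w)^L}{L!}\right]=e^{-r}J_0\left(2\sqrt{rw}\right)$ with $|J_0|\le 1$, which is essentially how those facts are established in the cited reference.
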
 
\begin{lemma}[Variance]
Without loss of generality, let $t_1 = \max_j t_j$ and suppose $t_1 \geq 1$ then
\[
\mbox{Var}(\hat{U}^W - U) \leq \sum n_j  e^{2r(t_1-1)} + \sum_j n_j t_j.
\]
\label{lem:var}
\end{lemma}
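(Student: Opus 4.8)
The plan is to Poissonize the sampling and reduce the variance to a sum of independent per-element contributions. First I would pass to the Poissonized model, replacing the deterministic sample sizes by independent Poisson counts, so that in period one each domain element $y$ (with frequency vector $\boldsymbol{\alpha}(y) = (\alpha_1(y),\dots,\alpha_m(y))$) is seen $X_j(y) \sim \mathrm{Poi}(n_j\alpha_j(y))$ times in population $j$ and $Y_j(y) \sim \mathrm{Poi}(n_j t_j\alpha_j(y))$ times in period two, all independent across $j$ and across $y$; transferring back to fixed sample sizes only perturbs the bound by lower-order terms (a standard de-Poissonization step). Writing $\mathbf{X}(y) = (X_1(y),\dots,X_m(y))$ and likewise $\mathbf{Y}(y)$, we get $\hat U^W - U = \sum_{y\in\Omega}(f(y) - g(y))$, where $f(y) = -(\prod_j(-t_j)^{X_j(y)})\,W(\mathbf{X}(y))\,\mathbf{1}[\mathbf{X}(y) \neq \mathbf{0}]$ is the contribution of $y$ to $\hat U^W$ and $g(y) = \mathbf{1}[\mathbf{X}(y) = \mathbf{0}]\,\mathbf{1}[\mathbf{Y}(y) \neq \mathbf{0}]$ is its contribution to $U$; these summands are independent, so $\mathrm{Var}(\hat U^W - U) = \sum_y \mathrm{Var}(f(y) - g(y))$. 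The structural point to exploit is that $f$ and $g$ have disjoint support: $g(y) \neq 0$ forces $\mathbf{X}(y) = \mathbf{0}$, which forces $f(y) = 0$, so $f(y)g(y) \equiv 0$ and hence $\mathrm{Var}(f(y) - g(y)) \le \mathbb{E}[(f(y) - g(y))^2] = \mathbb{E}[f(y)^2] + \mathbb{E}[g(y)^2] = \mathbb{E}[f(y)^2] + \mathbb{E}[g(y)]$.

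Summing over $y$ gives $\mathrm{Var}(\hat U^W - U) \le \sum_y \mathbb{E}[f(y)^2] + \mathbb{E}[U]$. The second term is immediate, since $U$ never exceeds the total number of period-two samples: $\mathbb{E}[U] \le \sum_j n_j t_j$, which is exactly the second term of the claim. For the first term, using $(-t_j)^{2X_j} = t_j^{2X_j}$ and $W(\mathbf{X}(y)) = \mathbb{P}(L \ge S(y))$ with $S(y) := \sum_{j\in A} X_j(y)$ and $L \sim \mathrm{Poi}(r)$, together with $t_j \le 1$ for $j \notin A$ and $t_j \le t_1$ for $j \in A$, I would derive the pointwise bound $f(y)^2 \le t_1^{2S(y)}\,\mathbb{P}(L \ge S(y))^2\,\mathbf{1}[\mathbf{X}(y) \neq \mathbf{0}] = (t_1^{S(y)}\mathbb{P}(L \ge S(y)))^2\,\mathbf{1}[\mathbf{X}(y) \neq \mathbf{0}]$. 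It is essential here to keep the square intact rather than use the wasteful $W^2 \le W$.

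Everything then reduces to the elementary estimate $t_1^{k}\,\mathbb{P}(L \ge k) \le e^{r(t_1-1)}$ for every integer $k \ge 0$ (when $t_1 \ge 1$), which follows in one line because $t_1^{k} \le t_1^{\ell}$ for $\ell \ge k$: $t_1^{k}\,\mathbb{P}(L \ge k) = t_1^{k}\sum_{\ell \ge k} e^{-r}\frac{r^{\ell}}{\ell!} \le \sum_{\ell \ge k} e^{-r}\frac{(rt_1)^{\ell}}{\ell!} \le e^{-r}e^{rt_1} = e^{r(t_1-1)}$. Applying this with $k = S(y)$ gives $f(y)^2 \le e^{2r(t_1-1)}\,\mathbf{1}[\mathbf{X}(y) \neq \mathbf{0}]$, so $\mathbb{E}[f(y)^2] \le e^{2r(t_1-1)}\,\mathbb{P}(\mathbf{X}(y) \neq \mathbf{0}) = e^{2r(t_1-1)}(1 - e^{-\sum_j n_j\alpha_j(y)}) \le e^{2r(t_1-1)}\sum_j n_j\alpha_j(y)$, and summing over $y$ with $\sum_y \alpha_j(y) = 1$ yields $\sum_y \mathbb{E}[f(y)^2] \le e^{2r(t_1-1)}\sum_j n_j$, which combined with the bound on $\mathbb{E}[U]$ finishes the proof. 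I expect the only real obstacle to be spotting the right pairing — bounding $(t_1^{S}\mathbb{P}(L \ge S))^2$ directly by $e^{2r(t_1-1)}$ via the exponential-tail estimate, rather than first collapsing $W^2$ to $W$, which would only give the weaker exponent $r(t_1^2-1) \ge 2r(t_1-1)$ — after which everything is bookkeeping; note also that $m$ never enters, since the per-element bound depends on the $n_j$ only through $\sum_j n_j\alpha_j(y)$.
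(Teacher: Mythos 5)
Your proof is correct and follows essentially the same route as the paper's: per-element independence in the Poissonized model, the disjoint-support observation to kill the cross term, $\mathbb{E}[U]\le\sum_j n_j t_j$ for the second term, and the bound $t_1^{k}\,\mathbb{P}(L\ge k)\le e^{r(t_1-1)}$ (the paper's Fact~3) applied to the squared per-element weight, with $\sum_y \mathbb{P}(\mathbf{X}(y)\neq\mathbf{0})=\mathbb{E}[\Phi_+]\le \sum_j n_j$ playing the role of the paper's $\mathbb{E}[\Phi_+]\sup h^2$ step. No substantive differences.
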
 

\vspace{-0.5cm}To obtain the optimal $r$ given in the statement of Prop.~\ref{prop:guarantee}, we set $r$ to balance the squared bias and variance. 

\section{Estimating the multi-population frequency distribution}\label{sec:bbopt}

While we have a linear estimator for the number of unseen elements in a new sample, it is challenging to construct good estimators of other statistics (e.g. number of new elements observed $\geq 2$) directly from the fingerprints. As discussed in Sec.~\ref{sec:defn}, we can also take the less direct approach of first trying to estimating the true underlying multi-population histogram. Given an accurate reconstruction of this underlying histogram, we can then estimate any symmetric statistic of the future samples.  %% Beyond the goal of estimating the number of new elements, this representation of the underlying distribution may be interesting in its own right, and can be leveraged to yield a number of other useful properties of the set of distributions. 
We discuss some of the uses of such a representation in Section~\ref{sec:otherUses}.  

%Analogous to the definition of histograms for a single population, we can define the multi-population histogram as follows. 
%\begin{definition}[Multi-population histogram]
%Given a collection of $m$ distributions $D_1,\ldots,D_m$ over a common domain, the corresponding multi-population histogram  $\sc{H}$ is a mapping from $[0, 1]^m\setminus 0^m\ \mapsto \mathbb{N} \cup \{0\}$. For each $\boldsymbol{\alpha} = (\alpha_1, \alpha_2, \hdots \alpha_m), H(\boldsymbol{\alpha}) = | \{y \mid D_j(y) = \alpha_j,  1 \leq j  \leq m\}|$, where $D_i(y)$ is the probability mass of domain element $y$ in the $i$th distribution $D_i$.
%\end{definition}
\paragraph{Recovering the frequency distribution}
The core of our algorithm to recover
the multi-population histogram is a natural extension of the single population algorithm presented in \citet{valiant2011estimating,valiant2013estimating}.

\begin{mdframed}
\textbf{Estimating the multi-population histogram: Core Approach.}\\
\textbf{Input:} Multi-population fingerprint $\Phi$ of samples, \\%extrapolation factors $t_1, t_2, \hdots t_j$.
\textbf{Output:} Two estimates, $\hat{\sc{H}}_{counts}$ and $\hat{\sc{H}}_{ll}$ of histogram corresponding to the  distributions underlying fingerprint $\Phi$.
\begin{itemize}
\item Compute $\hat{\sc{H}}_{counts}$ and $\hat{\sc{H}}_{ll}$ minimizing the following expressions:
$$ \hspace{-0.1cm}\hat{\sc{H}}_{counts} = \arg\min_H\sum\limits_{\boldsymbol{i}} \frac{1}{\sqrt{1+\Phi_{\boldsymbol{i}}}} 
| \Phi_{\boldsymbol{i}} - \hat{\Phi}(H)_{\boldsymbol{i}}|.$$
$$\hat{\sc{H}}_{ll} = \arg\max_H \sum\limits_{\boldsymbol{i}} \log poi(\Phi_{\boldsymbol{i}}, \hat{\Phi}(H)_{\boldsymbol{i}}),$$
$$\text{Where  } [\hat{\Phi}(\sc{H})]_{\boldsymbol{i}} = \sum\limits_{\boldsymbol{\alpha}} H(\alpha) \prod\limits_{j=1}^m bino(\alpha_j, n_j, i_j).$$
\end{itemize}
\end{mdframed}

The intuition behind these two optimization problems is the following.  The histogram corresponding to a set of distributions is an \emph{unlabeled} representation of the underlying distributions, hence it makes intuitive sense to try to recover the histogram that maximizes the likelihood of the \emph{unlabeled} representation of the samples, namely the fingerprint $\Phi.$ Recent work~\citep{acharya2016unified} provided rigorous support for this intuition.  In general, however, this likelihood might be difficulty to compute.  Nevertheless, an efficiently computable proxy for this likelihood can be obtained by treating the distribution of the fingerprint, corresponding to a histogram $H$, as a product distribution, with $\Phi_{i_1,\ldots,i_m}$ distributed according to the Poisson distribution with appropriate expectation $E_H[\Phi_{i_1,\ldots,i_m}].$   The recent central limit theorem for ``Poisson Multinomials'' from~\cite{valiant2011estimating} provides at least some corroboration for the reasonableness of having a proxy for the log-likelihood that decomposes linearly across the different elements of $\Phi.$    The motivation for the $\frac{1}{\sqrt{1+\Phi_{\boldsymbol{i}}}}$ scaling on the first proxy likelihood function is that this expression  penalizes discrepancies between the observed and expected fingerprint entries according to a rough approximation of the standard deviation of that fingerprint entry, as the variance of a Poisson random variable is equal to its expectation, and the observed fingerprint entry is an approximation for the expected fingerprint entry given the true underlying histogram.

The work~\cite{valiant2013estimating} focused on recovering $\hat{\sc{H}}_{counts}$, as this optimization problem can be formulated as a linear program, whose variables correspond to a fine discretization of the potential support of the histogram.   Unfortunately, in the present multi-distribution setting, the number of variables required by this linear programming approach would scale exponentially with the number of distributions in question.  Even for fingerprints derived from modest-sized samples from two distributions, the resulting linear program becomes impractical.

Instead of pursuing the linear programming based approach, we instead propose a black-box optimization approach to finding a histogram that optimizes either of the two proxy likelihood functions.  In this optimization approach, the dimensionality of the optimization problem is specified by the user, and corresponds to the number of $(i_1,\ldots,i_m)$ tuples for which the returned histogram $\hat{\sc{H}}$ is nonzero.   Denoting this quantity by $s$, the resulting optimization problem can be regarded as the problem of specifying $s$ vectors $(h_1,\alpha_{1,1},\ldots,\alpha_{1_,m}),\ldots,(h_s,\alpha_{s,1},\ldots,\alpha_{s,m})$.  These $s$ vectors are then interpreted as a histogram $H$ with $H(\boldsymbol{\alpha}_j)=h_j$ for all $j \in \{1,\ldots,s\}$, and $H(\boldsymbol{\alpha}) = 0$ for all other vectors $\boldsymbol{\alpha}$.

The one additional modification that leads to a substantial improvement in runtime is to only evaluate the proxy likelihood expressions for fingerprint entries $\Phi_{i_1,\ldots,i_m} \ge 2$.  The intuition for this is two-fold.  First, the number of vectors $(i_1,\ldots,i_m)$ for which $\Phi_{i_1,\ldots,i_m} = 0$ will scale exponentially with $m$, as opposed to scaling as some parameter of the sample sizes; this is clearly undesirable.  Second, given that we wish to avoid evaluating the contribution to the proxy likelihood from fingerprint entries that are zero, we must now be careful in dealing with fingerprint entries that are equal to 1. Suppose we have 1 element with true probability $\frac{i}{n}$ and suppose we observe that fingerprint entry $\Phi_i = 1$, and the other fingerprints near $i$ are $0$. Since we are maximizing the likelihood that $\Phi_i = 1$ (without taking into account the nearby 0 entries), we would assign roughly $\sqrt{i}$ elements to probability $\frac{i}{n}$ which is undesirable. Removing the ones largely resolves this issue. Note that the $\Phi_j=2$ entries do not cause as much of an issue, as such collisions are unlikely to occur in regions of the fingerprint in which there is not a significant number of domain elements. 
%% To see this, consider a distribution with two elements that each arise with probability $1/2$.  Given a sample of size $n$, we are likely to observe them both $\approx n/2 \pm \sqrt{n}$ times.  Suppose $\Phi_{10+n/2}=1,$ and $\Phi_{\sqrt{n}+n/2}=1,$ and $\Phi(i)=0$ for other $i$.  If we ignore the contribution to the proxy likelihood from all the zero fingerprint entries, then the proxy likelihood would be maximized by assigning roughly $\sqrt{n}$ domain elements to exist with probability $1/2$ so as to maximize the likelihood that we would see one element exactly $10+n/2$ times, and one exactly $\sqrt{n}+n/2$ times.

In this one-distribution example, a constraint on the total probability mass being 1 would resolve this issue, though analogs of this issue in the multiple distribution setting cannot be resolved in this way.  Hence, we adopt the crude, but effective approach of viewing all the empirical fingerprint entries that are equal to 1 as being reflective of an element in the underlying set of distributions whose probability is close to the empirical probability of the corresponding element.  We summarize the complete algorithm below:

\begin{framed}
\textbf{Estimating the multi-population histogram: Full Algorithm.}\\
\textbf{Input:} Multi-population fingerprint $\Phi$ derived from samples from $m$ distributions of respective sizes $n_1,\ldots,n_m$. \\%extrapolation factors $t_1, t_2, \hdots t_j$.
\textbf{Output:} Two estimates, $\hat{\sc{H}}_{counts}$ and $\hat{\sc{H}}_{ll}$ of histogram corresponding to the  distributions underlying fingerprint $\Phi$.
\begin{itemize}[leftmargin=*]
\vspace{-.2cm}\item Remove fingerprint entries that are 1, and add to empirical portion of histogram:
\begin{enumerate}
\vspace{-.2cm}\item Initialize $m$-distribution histogram $\hat{\sc{H}}_{emp}$ to be identically zero.
\item For each vector $\boldsymbol{i}=(i_1,\ldots,i_m)$ such that $\Phi_{\boldsymbol{i}}=1$, set $\hat{\sc{H}}_{emp}(\frac{i_1}{n_1},\ldots,\frac{i_m}{n_m}) = 1.$
\end{enumerate}
\vspace{-.2cm}\item Compute $\hat{\sc{H}}_{counts}$ and $\hat{\sc{H}}_{ll}$ minimizing the following expressions:
\begin{align*}
&\hat{\sc{H}}_{counts} = \arg\min_H \sum\limits_{\boldsymbol{i}:\Phi(\boldsymbol{i}) \ge 2} \frac{1}{\sqrt{1+\Phi_{\boldsymbol{i}}}} | \Phi_{\boldsymbol{i}} - \hat{\Phi}(H)_{\boldsymbol{i}}|.\\
& \hat{\sc{H}}_{ll} = \arg\max_H \sum\limits_{\boldsymbol{i}:\Phi(\boldsymbol{i}) \ge 2} \log poi(\Phi_{\boldsymbol{i}}, \hat{\Phi}(H)_{\boldsymbol{i}}.\\
&\text{Where } \hat{\Phi}(\sc{H})_{\boldsymbol{i}} = \sum\limits_{\boldsymbol{\alpha}} H(\alpha) \prod\limits_{j=1}^m bino(\alpha_j, n_j, i_j).
\end{align*}
Subject to the constraint that, together with $\hat{\sc{H}}_{emp},$ the total mass in all the distributions is 1. Namely for all $i \in \{1,\ldots, m\},$ $$\sum_{\boldsymbol{\alpha}}\alpha_i\hat{\sc{H}}_{ll}(\boldsymbol{\alpha})  +\sum_{\boldsymbol{\alpha}} \alpha_i \hat{\sc{H}}_{*}(\boldsymbol{\alpha})= 1.$$
\vspace{-.6cm}\item Return the concatenation of the empirical portion of the histogram and the portion returned by the optimization: $\hat{\sc{H}}_{count}:=\hat{\sc{H}}_{count}+ \hat{\sc{H}}_{emp}$, and $\hat{\sc{H}}_{ll}:=\hat{\sc{H}}_{ll}+ \hat{\sc{H}}_{emp}$
\end{itemize}
\end{framed}

\paragraph{Leveraging $\hat{\sc{H}}$ for approximating the value of additional data.}\label{sec:otherUses}
An accurate representation of the histogram corresponding to the multi-population distribution underlying a given set of observations can be leveraged to estimate a number of useful properties.  These properties include estimating the number of new domain elements that would likely be seen given additional samples from the populations.  Specifically, given a histogram $\hat{H}$, corresponding to $m$ populations, we can estimate the expected number of distinct elements that will be observed in samples from the $m$ populations of respective sizes $n_1,\ldots,n_m$ via the simple formula:
\begin{eqnarray}
E[\text{num observed}] = \sum_{\boldsymbol{\alpha}}\hat{H}(\boldsymbol{\alpha})\left(1-\prod_{i=1}^m(1-\alpha_i)^{n_i}\right).
\label{eqn:extrapolation}
\end{eqnarray}

\vspace{-.2cm}An accurate approximation to the histogram can also be leveraged to answer many other questions about the populations that can not be readily addressed via the linear estimators of Section~\ref{sec:linear_est}.  These include tasks such as estimating the amount of data that must be collected to capture, say, $99\%$ of the mass of the distributions in question.

\begin{figure*}[!tb]
\centering
\includegraphics[ scale = 0.22, trim={1.5cm 3cm 1.5cm  0}]{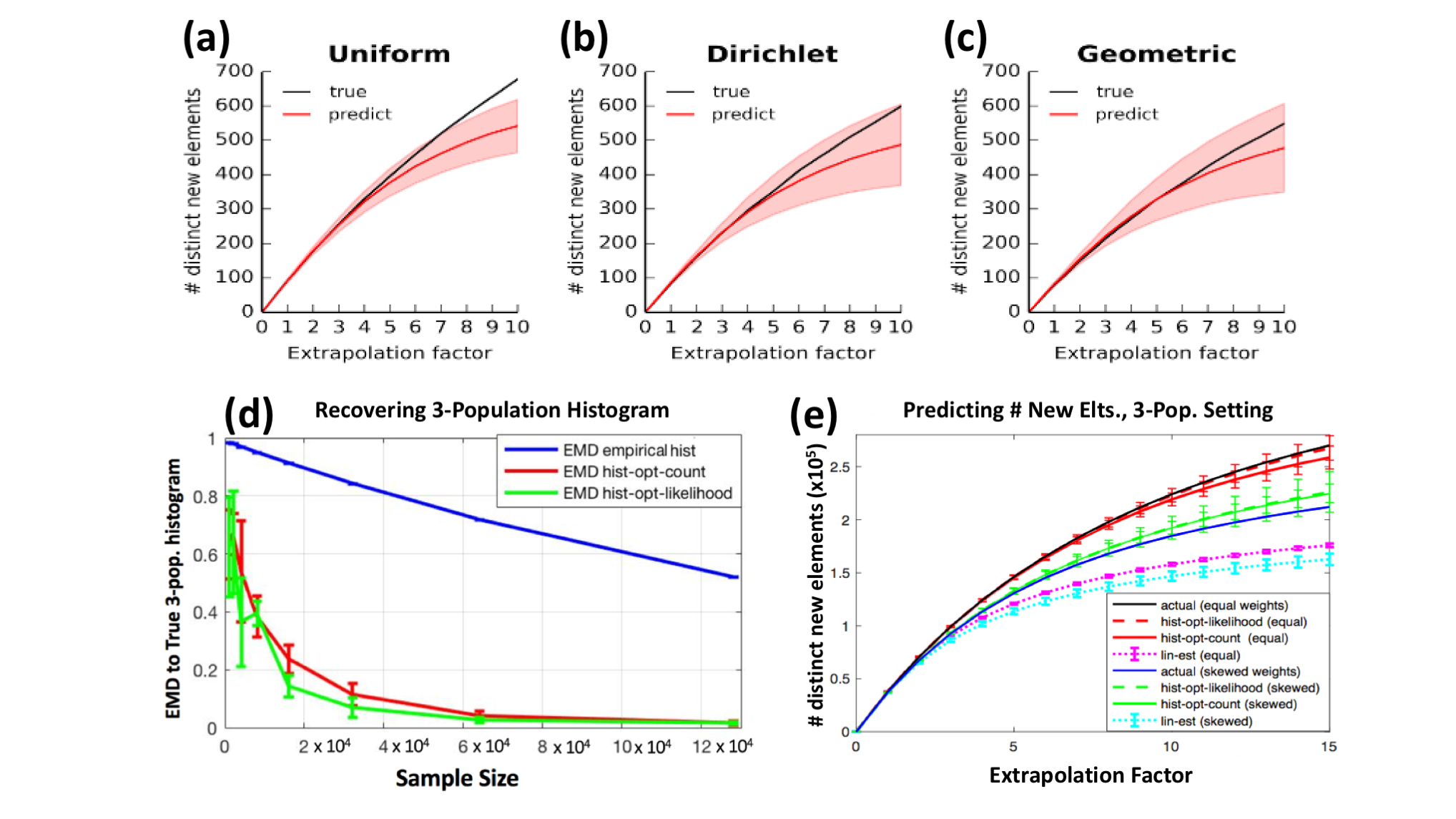}
\caption{Performance of the weighted linear estimator of Prop.~\ref{prop:guarantee} for \textbf{(a)} Uniform, \textbf{(b)} Dirichlet and \textbf{(c)} Geometric distributions. Each experiment contains 100 populations. The $x$-axis corresponds to the maximum extrapolation factor among the 100 populations ($t_1$ in Prop.~\ref{prop:guarantee}). The black curve indicates the true number of distinct new elements that we expect to observe in the new samples, and the red curve shows the predicted number of new elements. The red shaded region corresponds to one standard deviation over 100 independent experiments.  \textbf{(d)} The 3-population earthmover distance (EMD) between the recovered histograms and the true histogram corresponding to the populations from which the samples were drawn.  The blue line corresponds to the histogram of the empirical distribution of the samples, and the red and green lines correspond to the histograms returned by our multi-population histogram estimation algorithm, using the count-objective and likelihood objectives, respectively.  Plots depict the mean and standard deviation over 5 independent runs.  The true underlying distribution is supported on $4\cdot 10^5$ domain elements.  \textbf{(e)}  Estimating the number of new domain elements that will be observed given additional samples in the same 3-population setting. Estimates are made for when the new samples are evenly distributed among the populations (equal) and when the majority come from one population (skewed). Error bars depict one standard deviation about the mean, calculated based on 10 independent trials. } 
\label{fig:combined}
\end{figure*}

\begin{figure*}[!tbp]
  \centering
  \includegraphics[scale = 0.4, trim={1cm 1.5cm 1cm .7cm}]{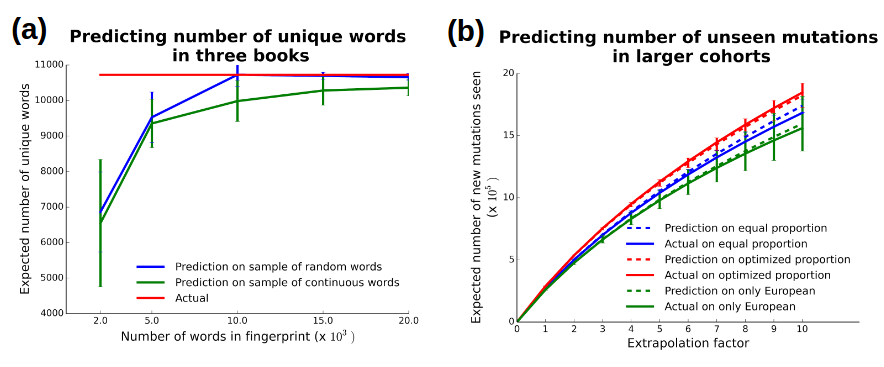}
  \caption{  \textbf{(a)} Estimating the total number of unique words combining three different books using hist-opt-counts. Predictions are based on fingerprints of samples of words---sampled without replacement either randomly (blue) or from a contiguous block of text (green) from each book. Error bars depict one standard deviation over 10 independent runs. \textbf{(b)} Estimating the number of new mutations that would be observed given additional samples from four different populations using hist-opt-counts. We consider different ratios of sampling within these  populations and observe the change in number of new mutations that would be observed.\label{fig:f2}}
\end{figure*}

\vspace{-0.2cm}

%\vspace{-.2cm}
\section{Experiments}

\paragraph{Evaluating the weighted linear estimator for large $m$.} We empirically evaluated the performance of the weighted linear estimator $\hat{U}^W$. The experiments were conducted for three types of distributions---Uniform, Dirichlet and Geometric---that are commonly used to evaluate extrapolation algorithms. Each experiment contains $m=100$ populations. We have a total of 3000 distinct elements. In the Uniform setting, each population has support on 100 elements that are randomly sampled from the 3000. For Dirichlet, each population also has support on 100 random elements (from the 3000), and the weights on these 100 elements are sampled from a Dirichlet prior. For the Geometric experiments, each population corresponds to a random ordering of the 3000 elements and the $k$-th element is assigned probability  $\propto  (1-p)^k p$. In period one, ten samples are observed in each of the 100 populations. In period two, 95 randomly chosen populations have extrapolation factor $t \in [0,1]$ and five populations have extrapolation factor $10t$. This simulates the setting where we can obtain substantially more samples from a subset of the populations. 

Figure~\ref{fig:combined}(a, b, c) shows the results of the experiments for Uniform, Dirichlet(1) and Geometric with $p = 0.05$ respectively. The results for other parameter settings are qualitatively similar. The black curves indicate the true number of distinct new elements we expect to observe in period two by sampling from the true underlying distributions. The red curves are the predictions of the weighted linear estimator (shaded regions indicate one standard deviation across 100 experiments). In all three settings, $\hat{U}^W$ provides accurate estimate with low variance when the maximum extrapolation factor is relatively small ($\le 3$). For Uniform and Geometric distributions, the accuracy is high up to 10 fold extrapolation. For Zipf, the bias is low but variance becomes large for the maximum extrapolation factor around 10. The downward bias in the predictions is due to the weighting scheme. The relative error of the weighted estimator, $\left(\frac{\hat{U}^W - U}{\sum n_j t_j} \right)^2$, is 0.09, 0.08 and 0.08 for the Uniform, Dirichlet and Geometric distributions when the maximum extrapolation factor is 10. This confirms the theoretical results of Prop.~\ref{prop:guarantee} on the accuracy of the weighted linear estimator. 

\textbf{Evaluating the histogram estimators.}
We first validated the performance of $\hat{H}_{count}$ and $\hat{H}_{ll}$ on a three population setting with synthetic data. The true population consists of three uniform distributions over 200k elements, whose supports have 100k elements in common, and 100k elements unique to each distribution. In Figure~\ref{fig:combined}(d), the x-axis corresponds to the number of samples we observe from each population, and the y-axis indicates the earthmover distance (EMD) between $\hat{H}_{count}$, $\hat{H}_{ll}$ and the true histogram. As a baseline, we also compute the EMD between the empirical histogram of the observed samples and the true histogram. $\hat{H}_{count}$ and $\hat{H}_{ll}$ performed roughly equally well and both are substantially better than the empirical estimator especially when the number of observed samples is small. Figure~\ref{fig:combined}(e) illustrates the extrapolation accuracy of our histogram estimators. We estimated $\hat{H}_{count}$ and $\hat{H}_{ll}$ using 16K from each population, and then used Eqn.~\ref{eqn:extrapolation} to estimate the number of unseen elements in additional samples. We tested two different settings: 1) when the additional samples are equally drawn from the three populations, and 2) a skewed mixture where 5/6 of the new samples are from population 1 and 1/12 each are drawn from population 2 and 3.  $\hat{H}_{count}$ and $\hat{H}_{ll}$ gave extremely accurate predictions. In comparison, the weighted linear estimator $\hat{U}^W$ was accurate for the initial extrapolations but has downward bias when the extrapolation increases, consistent with Fig.~\ref{fig:combined}(a-c).  

Additionally, we evaluate the performance of $\hat{H}_{count}$ on a real dataset, in which we sampled words from three books--\textit{Hamlet} (32K total words), \textit{Treasure Island} (40K) and \textit{The Sun Also Rises} (72K). We used the true word frequencies (over the entire text) as the true histogram. We sampled a small number of words (equal in all books) either randomly or from a contiguous block of text and used $\hat{H}_{counts}$ to predict the total number of distinct words in total in all three books. In Figure~\ref{fig:f2}(a), the red line is the true value, and blue and green lines are predictions based on $\hat{H}_{count}$ derived from samples of either random words, or words occurring in a random contiguous block of text, respectively.  We obtain accurate estimates using a fraction of words (10K from each book). The estimates based on independent samples of words is more accurate than that based on contiguous blocks of text---likely due to correlation in words that occur near each other.  

\textbf{Optimizing discovery rate.} Given the estimated histogram $\hat{H}_{count}$ or $\hat{H}_{ll}$, we can optimize the allocation of new samples across the populations to maximize the number of unseen elements we can expect to discover given a bound on $\sum_j t_j n_j$. To illustrate, we obtained genome sequencing data of 45K individuals from the Exome Aggregation Consortium \cite{lek2016analysis}. The individuals come from four ancestries: Europeans, Africans, East Asians and Latinos. We used all the observed mutations from the 45K samples to construct a four population frequency distribution. For the experiment, we treat this as the ground truth and sampled $10^5$ mutations from each population to obtain ``seen'' data. Suppose we have budget to sample $3 \times 10^6$ variants (10 fold extrapolation from current sample size), how should we allocate these new samples across the four populations in order to maximize the number of new variants discovered? We use $\hat{H}_{count}$ to predict the extrapolation curves for three scenarios: 1. all the samples are allocated to Europeans (current genomic studies are heavily enriched of Europeans); 2. the samples are evenly allocated across the four populations; 3) we explicitly optimize the factors $t_j$ using $\hat{H}_{count}$. The dotted curves in Fig.~\ref{fig:f2}(b) correspond to the predictions, and the solid curves are the actual numbers using the true distribution, showing good agreement. Optimization using $\hat{H}_{count}$ led to 10 \% increase in the number of new variants discovered. 
This is a simplistic example (there are many other factors in the design of real cohorts) but it illustrate the potential power in having multi-population histogram estimates. In Appendix Fig.~\ref{fig:4pop_words2}, we also show that $\hat{H}_{count}$ gives accurate predictions for a different statistic---the  number of new variants we expect to find at least twice in the new samples.

\vspace{-0.1cm}
\section{Discussion}
\vspace{-0.1cm}
We introduce and formalize the problem of multi-population unseen estimation. We provide a weighted linear estimator for the number of new elements and a general optimization algorithm to estimate the multi-population histogram. These two approaches have complementary strength. The weighted linear estimator $\hat{U}^W$ specifically estimates the number of unseen elements. It's accuracy is independent of the number of populations, $m$, and it is worst-case optimal. This can be a good method especially when $m$ is large and the extrapolation factor is small compared to $\log$ of the number of observed samples. When the extrapolation is larger, however, $\hat{U}^W$ is consistently downward biased due to its variance-reducing weights.  For relatively small number of populations ($m = 2,3,4$) and larger extrapolation factors, the unseen predictions of our histogram estimators, $\hat{H}_{count}$ and $\hat{H}_{ll}$ are significantly more accurate than $\hat{U}^W$.   While both likely have comparable worst-case performance, the linear estimator nearly always incurs this worst-case loss and is largely incapable of extrapolating beyond this worst-case logarithmic factor. In contrast, the histogram-based estimators seem to perform well for much larger extrapolation factors on all of the distributions that we considered. $\hat{H}_{count}$ and $\hat{H}_{ll}$ are computationally more expensive than $\hat{U}^W$, but are still tractable for many applications---each run of our experiments took less than 20 minutes on a single laptop.
\iffalse
The frequency estimator is better able to leverage structures in the underlying distribution. Moreover, $\hat{H}_{count}$ and $\hat{H}_{ll}$ enables us to compute other statistics of interest. They also enable systematic optimization to improve discovery rates. $\hat{H}_{count}$ and $\hat{H}_{ll}$ are computationally more expensive than $\hat{U}^W$, but are still tractable for many applications (each run of our experiments took $<$ 20mins on a laptop). \fi
\begin{comment}

\begin{itemize}
\item \textbf{Vanilla LP}
pFor a single population, we have the following.

\begin{framed}
\textbf{Objective:} $\sum\limits_{i=1}^n \frac{1}{\sqrt{1 + \phi_i}}| \phi_i - \sum\limits_{x \in \mathcal{X}}h(x) \cdot bin(x, n, i)|$

\textbf{Constraints:} 
$h(x) \geq 0, \sum\limits_{x \in \mathcal{X}}h(x)x = 1$, 
\end{framed}

We can extend this to the multiple poplulation case in the following way. 
\begin{framed}
\textbf{Variables:} $h(x)$, where each $x$ is a $k$ dimensional vector with $h(x)$ representing
the number of elements having probability $x[i]$ in $i-$th population. 

\textbf{Objective:}
$\sum\limits_{i_1, i_2, \hdots i_k = 1}^{n_1, n_2, \hdots n_k} \frac{1}{\sqrt{1 + \phi_{i_1, i_2, \hdots i_k}}}| \phi_{i_1, i_2, \hdots i_k} - \sum\limits_{x \in \mathcal{X}}h(x) \cdot \prod\limits_{j = 1}^k bin(x[j], n_j, i_j)|$

\textbf{Constraints:}
$h(x) \geq 0$, $\sum\limits_{x \in \mathcal{X}}h(x)x[i] = 1, \forall i = 1, 2, \hdots k$ 

\end{framed}
\end{itemize}
\end{comment}

\newpage
\section*{Acknowledgments}  Gregory Valiant's contributions were supported by NSF CAREER CCF-1351108 and a Sloan Research Fellowship. James Zou is a Chan Zuckerberg Biohub investigator and is supported by NSF CISE-1657155.
 
\bibliography{multipop_unseen}
\bibliographystyle{icml2017}

\newpage
\onecolumn
\appendix

\label{app:prop1guarantee}
\section{Proof of Prop.~\ref{prop:prop1} and Prop.~\ref{prop:guarantee}}

\begin{proof}[Proof of Prop.~\ref{prop:prop1}]

For each element $x \in \mathcal{X}$, let $\lambda_{x, j} = n_j p_{x,j}$, where $p_{x, j}$ is the probability of $x$ in population $j$. We have
\[
\mathbb{E}[U] = \sum_x e^{-\sum_j \lambda_{x,j}}\left( 1 - e^{-\sum_j t_j \lambda_{x,j}} \right).
\]
The first term in the sum is the probability that $x$ is not observed in period one and the second term is the probability that $x$ is observed at least once in period two. Taylor expand the second term followed by Binomial expansion gives
\begin{eqnarray*}
\mathbb{E}[U] &=& \sum_x e^{-\sum_j \lambda_{x,j}}\sum_{i = 1}^{\infty} (-1)^{i+1}\frac{(\sum_j t_j \lambda_{x,j})^i} {i!} \\
&=& -\sum_x e^{-\sum_j \lambda_{x,j}} \sum_{i_1, ..., i_m: \sum i_j > 0} \prod_{j=1}^m \frac{(-t_j \lambda_{x,j})^{i_j}}{i_j !} \\
&=& -\sum_{i_1, ..., i_m: \sum i_j > 0} \sum_x e^{-\sum_j \lambda_{x,j}}\prod_{j=1}^m \frac{(-t_j \lambda_{x,j})^{i_j}}{i_j !}\\
&=& -\sum_{i_1, ..., i_m: \sum i_j > 0} \left(\prod_{j=1}^m (-t_j)^{i_j} \right) \mathbb{E}[\phi_{i_1 ... i_m}].
\end{eqnarray*}

It's easy to see that $\hat{U}$ is an unbiased estimator of the last expression. 
\end{proof}

Weighting the fingerprints reduces the variance of the estimator at the cost of introducing bias. We analyze the bias and variance of $\hat{U}^W$ separately. The proof follows the strategy of the analysis for the one population setting in~\cite{orlitsky2016optimal}.

\begin{lemma}[Lemma~\ref{lem:bias} restated] Let $n = \sum_{j=1}^m n_j$ denote the total number of samples in period one and $r$ denote the rate of the Poisson weights, then
\[
\left| \mathbb{E}[\hat{U}^W - U] \right| \leq \left(\sum_{j\in A} n_j (t_j + 1) \right)e^{-r}
\]
\end{lemma}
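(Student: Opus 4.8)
The plan is to bound the bias $\mathbb{E}[\hat U^W - U]$ by returning to the Poissonized expression for $\mathbb{E}[U]$ derived in the proof of Proposition~\ref{prop:prop1}, and comparing it term-by-term with $\mathbb{E}[\hat U^W]$. Recall from that proof that, writing $\lambda_{x,j} = n_j p_{x,j}$,
\[
\mathbb{E}[U] = -\sum_{i_1,\ldots,i_m:\sum i_j>0}\Bigl(\prod_{j=1}^m(-t_j)^{i_j}\Bigr)\mathbb{E}[\phi_{i_1\ldots i_m}],
\]
and that the unweighted estimator $\hat U$ exactly matches this. Since $\hat U^W$ differs from $\hat U$ only by inserting the factor $W(i_1,\ldots,i_m)=\mathbb{P}(L\ge\sum_{j\in A}i_j)$ with $L\sim\mathrm{Poi}(r)$, we get
\[
\mathbb{E}[\hat U^W - U] = \sum_{i_1,\ldots,i_m:\sum i_j>0}\Bigl(\prod_{j=1}^m(-t_j)^{i_j}\Bigr)\bigl(W(i_1,\ldots,i_m)-1\bigr)\,\mathbb{E}[\phi_{i_1\ldots i_m}].
\]
The first step is therefore to substitute $\mathbb{E}[\phi_{i_1\ldots i_m}] = \sum_x e^{-\sum_j\lambda_{x,j}}\prod_j \frac{\lambda_{x,j}^{i_j}}{i_j!}$, swap the order of summation to pull the sum over $x$ outside, and reduce to bounding, for each fixed $x$, the quantity $\bigl|\sum_{\vec i}\prod_j\frac{(-t_j\lambda_{x,j})^{i_j}}{i_j!}(W(\vec i)-1)\bigr|$.

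The second step is to exploit the structure of $W$. Since $W(\vec i) = \mathbb{P}(L\ge k)$ where $k=\sum_{j\in A}i_j$, the factor $W(\vec i)-1 = -\mathbb{P}(L<k) = -\mathbb{P}(L\le k-1)$ depends on $\vec i$ only through $k$. Split the index vector into its $A$-coordinates (those $j$ with $t_j>1$) and its complement; the complement coordinates can be summed freely, contributing a factor $\prod_{j\notin A}e^{-t_j\lambda_{x,j}}$ after combining with part of the $e^{-\sum_j\lambda_{x,j}}$ prefactor, and only the $A$-coordinates interact with $W$. For those coordinates one wants to recognize the sum $\sum_{\vec i_A:\,\sum i_j = k}\prod_{j\in A}\frac{(-t_j\lambda_{x,j})^{i_j}}{i_j!}$ as the coefficient of a power series: it equals $\frac{(-1)^k}{k!}(\sum_{j\in A}t_j\lambda_{x,j})^k$ by the multinomial theorem. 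So after this manipulation the per-$x$ bias contribution becomes, up to the harmless $j\notin A$ factors, $e^{-\sum_{j\in A}\lambda_{x,j}}\sum_{k\ge 1}\frac{(-1)^{k+1}}{k!}(\sum_{j\in A}t_j\lambda_{x,j})^k\,\mathbb{P}(L\le k-1)$ — exactly the shape of the single-population bias expression in~\cite{orlitsky2016optimal} with the effective parameters $\lambda = \sum_{j\in A}\lambda_{x,j}$ and $\lambda t = \sum_{j\in A}t_j\lambda_{x,j}$.

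The third step is the single-population-style estimate on this scalar sum. The standard argument (a Poisson-smoothing / generating-function identity) shows that $\bigl|\sum_{k\ge1}\frac{(-1)^{k+1}}{k!}\mu^k\,\mathbb{P}(L\le k-1)\bigr|e^{-\nu}$ — with $\mu = \sum_{j\in A}t_j\lambda_{x,j}$ and $\nu=\sum_{j\in A}\lambda_{x,j}$ — is at most $(\mu+\nu)e^{-r}$; intuitively the Poisson weights kill everything except an $e^{-r}$-sized tail, and $\mu+\nu = \sum_{j\in A}(t_j+1)\lambda_{x,j}$. Summing this bound over $x$ and using $\sum_x\lambda_{x,j} = n_j\sum_x p_{x,j} = n_j$ gives $\sum_x\sum_{j\in A}(t_j+1)\lambda_{x,j} = \sum_{j\in A}n_j(t_j+1)$, which is the claimed bound. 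I expect the main obstacle to be the third step: making the Poisson-weight tail estimate on the alternating scalar series fully rigorous (the right way is via the identity expressing $\sum_k\frac{(-\mu)^k}{k!}\mathbb{P}(L\le k-1)$ in terms of $\mathbb{E}[(1-\text{something})^L]$ or an integral representation, then bounding by $e^{-r}$ times a linear factor), and in carefully tracking that the contributions from coordinates $j\notin A$ combine cleanly with the prefactor rather than producing extra cross terms. The bookkeeping of which exponential factors attach to $A$ versus its complement is where sign or normalization slips are most likely, so I would set up the $A$/$A^c$ split explicitly before invoking the scalar lemma.
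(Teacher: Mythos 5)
Your proposal is correct and follows essentially the same route as the paper's proof: Poissonize, split the coordinates into $A$ and $A^c$, collapse the $A$-coordinates via the multinomial theorem into a scalar alternating series in $\mu=\sum_{j\in A}t_j\lambda_{x,j}$, bound that series by the single-population Poisson-tail estimate of Orlitsky et al.\ (their Facts give $(1-e^{-\mu})e^{-r}\le(\mu+\nu)e^{-r}$, matching your claimed scalar bound), and sum over $x$ using $\sum_x\lambda_{x,j}=n_j$. The "main obstacle" you flag in step three is handled in the paper exactly as you anticipate, by citing the generating-function identities from the single-population analysis, so there is no gap.
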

\begin{proof}
For each element $x$, its contribution to  $\mathbb{E}[\hat{U}^W]$ can be written as 
\begin{eqnarray*}
& &-e^{-\sum_{j=1}^m \lambda_{x, j}}\left(\sum_{i_1,...,i_m}\prod_{j=1}^m (-t_j)^{i_j}\frac{\lambda_{x,j}^{i_j}}{i_j!}\mathbb{P}\left(L\geq \sum_{j\in A}i_j \right) - 1 \right)\\
&=& -e^{-\sum_{j=1}^m \lambda_{x, j}}\left(\left[\sum_{i_j: j\not\in A}\prod_{j \not\in A} (-t_j)^{i_j}\frac{\lambda_{x,j}^{i_j}}{i_j!} \right] \left[ \sum_{i_j: j\in A}\prod_{j \in A} (-t_j)^{i_j}\frac{\lambda_{x,j}^{i_j}}{i_j!}\mathbb{P}\left(L\geq \sum_{j\in A}i_j \right)\right]-1\right) \\
&=& -e^{-\sum_{j=1}^m \lambda_{x, j}}\left( e^{-\sum_{j\not\in A} t_j\lambda_{x, j}}\sum_{i=0}^{\infty} \frac{\mathbb{P}(L \geq i)}{i!} \left( -\sum_{j\in A} t_j \lambda_{x,j} \right)^i-1\right)
\end{eqnarray*}

We use the following two facts from~\cite{orlitsky2016optimal}. 

\paragraph{Fact 1} For all $y > 0$ and for any random variable $L$,
\[
\left|-\sum_{i=0}^{\infty} \frac{\mathbb{P}(L \ge i)}{i!} (-y)^i + e^{-y} \right| \leq \max_{s \leq y} \left| \mathbb{E}\left[ \frac{(-s)^L}{L!} \right] \right| \left( 1-e^{-y} \right).
\]

\paragraph{Fact 2} If $L \sim \mbox{Poi}(r)$, then 
\[
\left| \mathbb{E}\left[ \frac{(-s)^L}{L!} \right] \right| \leq e^{-r}.
\]

Therefore, the contribution of $x$ to $\mathbb{E}[\hat{U}^W - U]$ is 
\[
e^{-\sum_{j=1}^m \lambda_{x, j} - \sum_{j\not\in A} t_j\lambda_{x, j}} \left[ -\sum_{i=0}^{\infty} \frac{\mathbb{P}(L \geq i)}{i!} \left( -\sum_{j\in A} t_j \lambda_{x,j} \right)^i  + e^{-\sum_{j \in A} t_j \lambda_{x,j}} \right] \leq \left( 1 - e^{-\sum_{j \in A} t_j \lambda_{x,j}} \right) e^{-r}
\]
where we used Facts 1 and 2 with $y = \sum_{j \in A} t_j \lambda_{x,j}$ and the fact that $e^{-\sum_{j=1}^m \lambda_{x, j} - \sum_{j\not\in A} t_j\lambda_{x, j}} \leq 1$. 

Now summing over $x \in \mathcal{X}$, we have 
\begin{eqnarray*}
\left| \mathbb{E}[\hat{U}^W - U] \right| &\leq&  \sum_x \left( 1 - e^{-\sum_{j \in A} t_j \lambda_{x,j}} \right) e^{-r} \\
&\leq& \sum_x\left( 1 - e^{-\sum_{j \in A} (t_j+1) \lambda_{x,j}} \right) e^{-r} \\
&=& \sum_x \left[ e^{-\sum_{j \in A} \lambda_{x,j}} \left( 1 - e^{-\sum_{j \in A} t_j\lambda_{x,j}}  \right) + 1 - e^{-\sum_{j \in A} \lambda_{x,j}} \right]e^{-r} \\
&=& \left(\mathbb{E}[U^A] + \mathbb{E}[\Phi_+^A] \right) e^{-r} \\
&\leq& \left(\sum_{j\in A} n_j (t_j+1)  \right)e^{-r}
\end{eqnarray*}
where $\Phi_+^A$ is the total number of distinct elements observed in period one for subpopulations $j \in A$ and $U^A$ is the number of new elements observed in period two for $j \in A$. 

\end{proof}

Lemma~\ref{lem:bias} quantifies the bias of the weighted estimator. Next we quantify its variance.

\begin{lemma}[Lemma~\ref{lem:var} restated]
Without loss of generality, let $t_1 = \max_j t_j$ and suppose $t_1 \geq 1$ then
\[
\mbox{Var}(\hat{U}^W - U) \leq n e^{2r(t_1-1)} + \sum_j n_j t_j.
\]
\end{lemma}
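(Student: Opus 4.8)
To prove Lemma~\ref{lem:var} I would follow the single-population variance analysis of~\cite{orlitsky2016optimal}, working in the Poissonized model already used in the proof of Prop.~\ref{prop:prop1}: the period-one count of domain element $x$ in population $j$ is $c_{x,j}\sim\mathrm{Poi}(\lambda_{x,j})$ with $\lambda_{x,j}=n_j p_{x,j}$, the period-two count is $d_{x,j}\sim\mathrm{Poi}(t_j\lambda_{x,j})$, and all of these are independent across $x$ and $j$ (de-Poissonization costs only lower-order terms and is routine). Since $\phi_{\boldsymbol{i}}=\sum_x\mathbf{1}[\boldsymbol{c}_x=\boldsymbol{i}]$ and $U$ counts domain elements, both $\hat{U}^W$ and $U$ decompose over $x$, giving $\hat{U}^W-U=\sum_x Z_x$ with
\[
Z_x=-\Bigl(\textstyle\prod_{j}(-t_j)^{c_{x,j}}\Bigr)W(\boldsymbol{c}_x)\,\mathbf{1}\bigl[\textstyle\sum_{j} c_{x,j}>0\bigr]\;-\;\mathbf{1}\bigl[\textstyle\sum_{j} c_{x,j}=0\bigr]\,\mathbf{1}\bigl[\textstyle\sum_{j} d_{x,j}>0\bigr].
\]
By independence, $\mathrm{Var}(\hat{U}^W-U)=\sum_x\mathrm{Var}(Z_x)\le\sum_x\mathbb{E}[Z_x^2]$, so it suffices to bound the per-element second moments. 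The two terms of $Z_x$ are supported on the complementary events $\{\sum_j c_{x,j}>0\}$ and $\{\sum_j c_{x,j}=0\}$, so their product vanishes identically and $\mathbb{E}[Z_x^2]=g_x+h_x$, where $g_x:=\mathbb{E}\bigl[\prod_j t_j^{2c_{x,j}}W(\boldsymbol{c}_x)^2\,\mathbf{1}[\sum_j c_{x,j}>0]\bigr]$ and $h_x:=\mathbb{P}(\sum_j c_{x,j}=0)\,\mathbb{P}(\sum_j d_{x,j}>0)$.

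The $h_x$ part is immediate: $\sum_x h_x=\sum_x e^{-\sum_j\lambda_{x,j}}\bigl(1-e^{-\sum_j t_j\lambda_{x,j}}\bigr)=\mathbb{E}[U]\le\sum_j t_j n_j$, since $U$ is at most the total number of period-two samples. All the work is in showing $\sum_x g_x\le n\,e^{2r(t_1-1)}$. Expanding $g_x$ against the product-Poisson law of $\boldsymbol{c}_x$, recall $W(\boldsymbol{i})=\mathbb{P}(L\ge\sum_{j\in A}i_j)$ with $L\sim\mathrm{Poi}(r)$ and $A=\{j:t_j>1\}$, and apply the Chernoff/Markov tail bound $\mathbb{P}(L\ge k)\le\theta^{-k}\mathbb{E}[\theta^L]=\theta^{-k}e^{r(\theta-1)}$, valid for any $\theta\ge1$, with $\theta=t_1$ (legitimate since $t_1\ge1$). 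This gives $W(\boldsymbol{i})^2\le e^{2r(t_1-1)}\prod_{j\in A}t_1^{-2i_j}$, hence
\[
g_x\;\le\; e^{2r(t_1-1)}\sum_{\boldsymbol{i}:\,\sum_j i_j>0}\ \prod_{j}\frac{e^{-\lambda_{x,j}}\lambda_{x,j}^{i_j}}{i_j!}\,\rho_j^{i_j},\qquad \rho_j=\begin{cases}t_j^2 & \text{if }j\notin A,\\ (t_j/t_1)^2 & \text{if }j\in A,\end{cases}
\]
and the crucial point is that $\rho_j\le1$ for every $j$ — for $j\notin A$ because $t_j\le1$, for $j\in A$ because $t_j\le t_1$. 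Summing each $i_j$ independently over $\mathbb{N}$ yields $\prod_j e^{\lambda_{x,j}(\rho_j-1)}=:P_x\le1$; removing the excluded term $\boldsymbol{i}=\boldsymbol{0}$, which equals $e^{-\sum_j\lambda_{x,j}}$, leaves $g_x\le e^{2r(t_1-1)}\bigl(P_x-e^{-\sum_j\lambda_{x,j}}\bigr)\le e^{2r(t_1-1)}\bigl(1-e^{-\sum_j\lambda_{x,j}}\bigr)\le e^{2r(t_1-1)}\sum_j\lambda_{x,j}$. Summing over $x$ and using $\sum_x\lambda_{x,j}=n_j$ gives $\sum_x g_x\le e^{2r(t_1-1)}\sum_j n_j=n\,e^{2r(t_1-1)}$; adding the $h_x$ bound proves the lemma.

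The main obstacle — and the only genuinely clever choice — is taking the Chernoff exponent to be exactly $\theta=t_1$: this is what forces every $\rho_j\le1$, so that the residual Poisson sum collapses to $1-e^{-\sum_j\lambda_{x,j}}\le\sum_j\lambda_{x,j}$ and $\sum_x g_x$ telescopes down to $n\,e^{2r(t_1-1)}$; a weaker exponent would leave a factor $e^{c\sum_j\lambda_{x,j}}$ that cannot be summed over a domain of unbounded size. The secondary thing to be careful about is justifying the per-element decomposition and independence, which is precisely why one works in the Poissonized model; the passage back to fixed sample sizes is standard.
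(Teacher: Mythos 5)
Your proof is correct and follows essentially the same route as the paper: the same per-element decomposition in the Poissonized model, the same observation that the cross-term vanishes on disjoint events, the same Chernoff-type bound $\mathbb{P}(L\ge k)\le t_1^{-k}e^{r(t_1-1)}$ (the paper's ``Fact 3''), and the same final use of $\mathbb{E}[U]\le\sum_j n_jt_j$. The only cosmetic difference is bookkeeping: the paper bounds the first term as $\mathbb{E}[\Phi_+]\cdot\sup_{\boldsymbol{i}}h(\boldsymbol{i})^2\le n\,e^{2r(t_1-1)}$, whereas you carry the pointwise bound through the Poisson expectation and recover the same $n=\sum_x\sum_j\lambda_{x,j}$ via $1-e^{-\sum_j\lambda_{x,j}}\le\sum_j\lambda_{x,j}$, which is the same quantity $\mathbb{E}[\Phi_+]\le n$ in disguise.
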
 

\begin{proof}
Let $N_{x,j}$ be the random variable corresponding to the number of times $x$ is found in population $j$ during period one. Let $N'_{x,j}$ be the random variable corresponding to the number of times $x$ is found in population $j$ during period two. Define $h(i_1, ..., i_m) = - \prod_{j=1}^m (-t_{i_j})^{i_j}\mathbb{P}\left(L \geq \sum_{j\in A} i_j  \right)$.

For every element $x$, its contribution to $\mbox{Var}(\hat{U}^W - U)$ is
\begin{eqnarray*}
& & \mbox{Var}\left[ \sum_{i_1, ..., i_m} \left(\prod_{j}1_{N_{x,j}=i_j}\right) h(i_1,...,i_m) -  \left(\prod_{j}1_{N_{x,j}=0}\right)\left( 1- \prod_{j}1_{N'_{x,j}=0} \right) \right] \\
&\leq& \mathbb{E} \left[ \sum_{i_1, ..., i_m} \left(\prod_{j}1_{N_{x,j}=i_j}\right) h(i_1,...,i_m) -  \left(\prod_{j}1_{N_{x,j}=0}\right)\left( 1- \prod_{j}1_{N'_{x,j}=0} \right) \right]^2 \\
&=& \mathbb{E} \left[ \sum_{i_1, ..., i_m} \left(\prod_{j}1_{N_{x,j}=i_j}\right) h(i_1,...,i_m)^2 + \left(\prod_{j}1_{N_{x,j}=0}\right)\left( 1- \prod_{j}1_{N'_{x,j}=0} \right)\right].
\end{eqnarray*}
The last equality follows because the cross-term vanishes since the events $N_{x,j}=0, \forall j$ and $N_{x,j}=i_j, \sum_j i_j > 0$ are disjoint. Summing over all $x$ gives
\begin{align}
\mbox{Var}(\hat{U}^W - U) &\leq \mathbb{E}[\Phi_+] \sup_{i_1,...,i_m}h(i_1, ..., i_m)^2 + \mathbb{E}[U] \\
&\leq n \sup_{i_1,...,i_m}h(i_1, ..., i_m)^2 + \sum_j n_j t_j \label{eqn:variance}.
\end{align}
Moreover we have
\begin{eqnarray*}
| h(i_1, ..., i_m) | &=& \left( \prod_j t_j^{i_j} \right) \mathbb{P}
\left(L \geq \sum_{j \in A} i_j  \right) \\
&\leq& t_1^{\sum_{j \in A} i_j} \mathbb{P}\left(L \geq \sum_j i_j  \right) \\
&\leq& e^{r(t_1-1)}
\end{eqnarray*}
where we have used the following fact:

\paragraph{Fact 3} If $L \sim \mbox{Poi}(r)$ and $t\geq 1$, then for all $i>0$
\[
t^i \mathbb{P}(L \geq i) \leq e^{r(t-1)}.
\]
Note that only $t_{max}\geq 1$ is assumed here; the other $t_j$'s could be less than 1. 
\end{proof}

Putting the last two lemmas together, we have
\begin{lemma}
Let $t_1 = \max_j t_j \geq 1$, then
\[
\mathbb{E}\left[ (\hat{U}^W - U)^2 \right] \leq n e^{2r(t_1-1)} + \sum_j n_j t_j + \left(\sum_{j\in A} n_j( t_j + 1) \right)^2e^{-2r}
\]
\label{lem:combined}
\end{lemma}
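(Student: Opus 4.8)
\textbf{Proof proposal for Lemma~\ref{lem:combined}.}
The plan is to invoke the standard bias--variance decomposition of mean squared error applied to the random variable $X := \hat{U}^W - U$. Since for any square-integrable random variable $X$ one has $\mathbb{E}[X^2] = \mathrm{Var}(X) + (\mathbb{E}[X])^2$, it suffices to combine the two estimates already proved: the variance bound of Lemma~\ref{lem:var} controls $\mathrm{Var}(\hat{U}^W - U)$, and the bias bound of Lemma~\ref{lem:bias} controls $(\mathbb{E}[\hat{U}^W - U])^2$ after squaring.

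Concretely, I would first write $\mathbb{E}[(\hat{U}^W-U)^2] = \mathrm{Var}(\hat{U}^W-U) + (\mathbb{E}[\hat{U}^W-U])^2$. Then I substitute the bound $\mathrm{Var}(\hat{U}^W-U) \le n e^{2r(t_1-1)} + \sum_j n_j t_j$ from Lemma~\ref{lem:var} for the first term. For the second term, I square the inequality $|\mathbb{E}[\hat{U}^W-U]| \le \big(\sum_{j\in A} n_j(t_j+1)\big)e^{-r}$ from Lemma~\ref{lem:bias}, which is legitimate since both sides are nonnegative, to obtain $(\mathbb{E}[\hat{U}^W-U])^2 \le \big(\sum_{j\in A} n_j(t_j+1)\big)^2 e^{-2r}$. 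Adding the two contributions yields exactly the claimed bound.

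There is essentially no obstacle here: the lemma is purely a bookkeeping step that packages the separately-established bias and variance estimates into a single MSE inequality, in preparation for the optimization over the Poisson rate $r$ that produces Prop.~\ref{prop:guarantee}. The only point worth a remark is that the bias bound must be squared (not the bound on the expectation of a square), so one should be careful that Lemma~\ref{lem:bias} is stated as a bound on the absolute value of the expectation, which is what makes the squaring valid and gives the $e^{-2r}$ factor.
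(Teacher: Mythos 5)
Your proposal is correct and is exactly the paper's argument: the paper introduces this lemma with the phrase ``putting the last two lemmas together,'' i.e.\ it applies the decomposition $\mathbb{E}[(\hat{U}^W-U)^2]=\mathrm{Var}(\hat{U}^W-U)+(\mathbb{E}[\hat{U}^W-U])^2$ and substitutes the bounds of Lemma~\ref{lem:var} and the square of the bound of Lemma~\ref{lem:bias} (noting $n=\sum_j n_j$). Nothing is missing.
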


Because $t_1 \geq 1$, Lemma~\ref{lem:combined} implies that 
\[
\mathbb{E}\left[ (\hat{U}^W - U)^2 \right] \leq (n+\sum_j n_j t_j) e^{2r(t_1-1)}+(n+\sum_j n_j t_j)^2 e^{-2r}.
\]
The two terms on the RHS are equal when $r=\frac{\log (n+\sum_j n_j t_j)}{2 t_{max}}$. Using this value of $r$, we have
\begin{eqnarray*}
\mathbb{E}\left[ \left(\frac{\hat{U}^W - U}{\sum n_j t_j}\right)^2 \right] &\leq& \left( \frac{n + t_1\bar{n}}{t_{\*}\bar{n}} \right)^2 (n + t_1\bar{n})^{-1/t_1} \\
&\leq& \left( \frac{n + t_1\bar{n}}{t_1\bar{n}} \right)^2 \bar{n}^{-1/t_1} \\
&\leq& \left( \frac{n + t_1n_1}{t_1n_1} \right)^2 n_{1}^{-1/t_1}
\end{eqnarray*}
where $\bar{n} \equiv \sum_j n_j t_j / t_1$. This completes the proof of Prop.~\ref{prop:guarantee}.

\section{Multi-population Earth Mover's Distance}
We define a natural distance metric on multi-population histograms, which is a measure of the extent to which the corresponding distributions are similar, up to a relabeling of the elements:

\begin{definition}\label{def:emd}
Given two $m$-population histograms, $H, H'$, the \emph{multi-population earthmover distance} $d_W(H,H')$ is defined as the minimum over all schemes of moving the histogram elements in $H$ to yield $H'$, where the cost of moving $c$ histogram elements from $\boldsymbol{\alpha} \in [0,1]^m$ to $\boldsymbol{\alpha}'$ is $c\frac{1}{2m}\|\boldsymbol{\alpha}-\boldsymbol{\alpha}'\|_1=\sum_{i=1}^m |\alpha_i-\alpha'_i|.$  To ensure that such a scheme exists, we regard there being an infinitude of elements that occur with probability zero in all populations, $H(\boldsymbol{0})=H'(\boldsymbol{0})=\infty.$
\end{definition}

Note that for all pairs of histograms $H,H'$ it holds that $d_W(H,H') \in [0,1]$, with $d_W(H,H')=0$ if and only if the distributions corresponding to $H$ and $H'$ are identical, up to relabeling the domain elements.  The following example illustrates the above definition:

\begin{example}
Consider a 3 population distribution corresponding to a three uniform distributions over $2n$ elements, where $n$ of the elements are common to all 3 populations, and the other elements are unique.  This corresponds to histogram $H$ defined by $H(1/2n,1/2n,1/2n)=n$, $H(1/2n,0,0)=n$, $H(0,1/2n,0)=n$, $H(0,0,1/2n)=n$.  Consider a second histogram $H'$ corresponding to three uniform distributions over a common set of $n$ elements, $H'(1/n,1/n,1/n)=n,$ and $H'(\boldsymbol{\alpha})=0$ for all $\boldsymbol{\alpha} \neq (1/n,1/n,1/n).$  The EMD $$d_W(H,H') =\frac{1}{2\cdot 3}\left(n\frac{3}{2n}+3n\frac{1}{2n}\right)=\frac{1}{2},$$ Since we can make $H'$ from $H$ by moving $n$ histogram elements from $(1/2n,1/2n,1/2n)$ to $(1/n,1/n,1/n)$ at a per-unit-cost of $\|(\frac{1}{2n},\frac{1}{2n},\frac{1}{2n})-(\frac{1}{n},\frac{1}{n},\frac{1}{n})\|_1=\frac{3}{2n},$ and then moving the remaining $3n$ elements of $H$ to $(0,0,0)$ at a per-unit-cost of $1/2n.$
\end{example}

\section{Additional experiments}
We tested the prediction accuracy of $\hat{H}_{count}$ on a different statistic: the number of elements we expect to find at least twice in the new samples, see Fig.~\ref{fig:4pop_words2}.

\begin{figure}[!tbp]
  \includegraphics[width=0.4\textwidth]{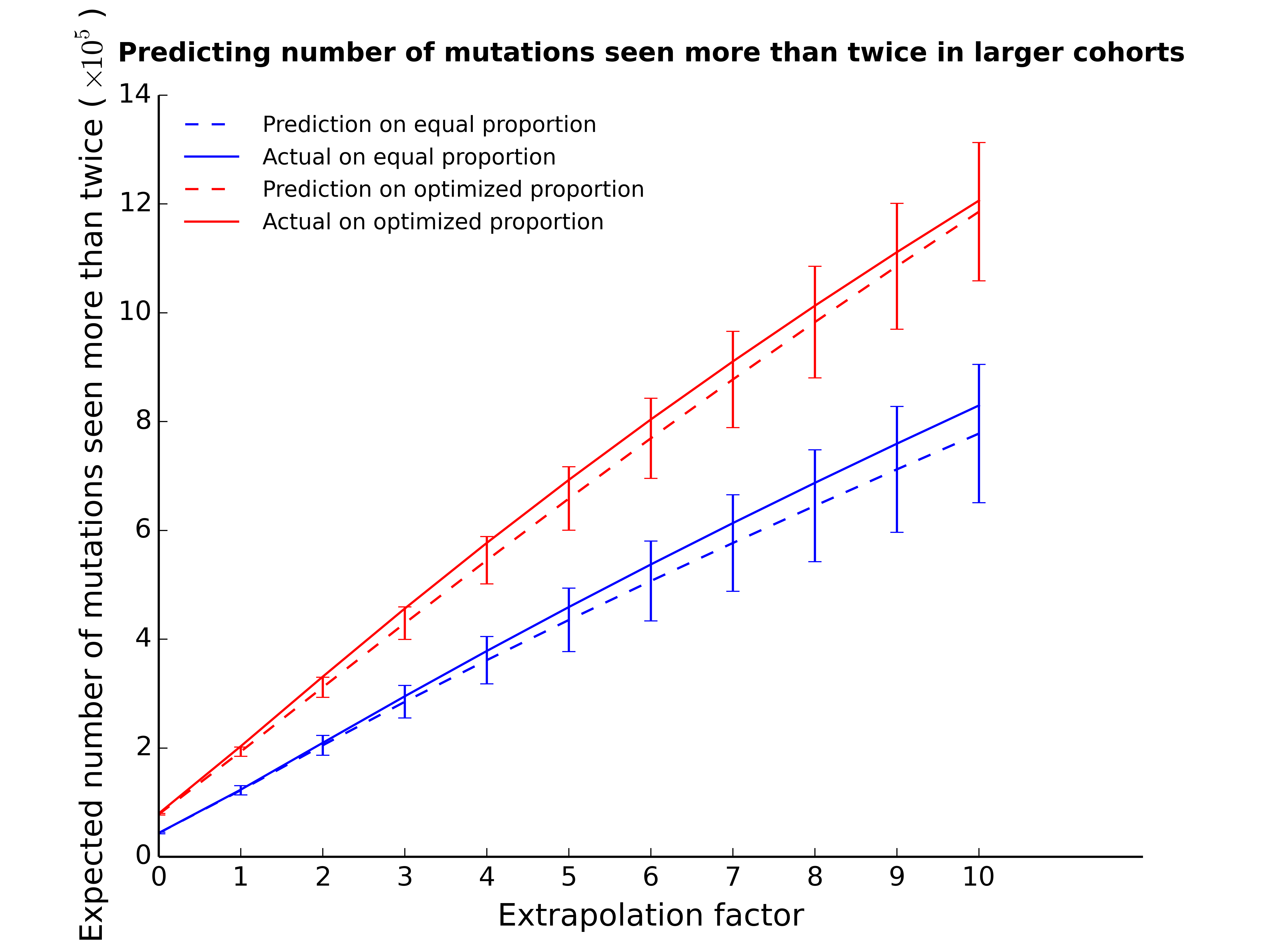}
  \caption{Estimating the number of new mutations that would be observed two or more times given additional samples from four different populations using hist-opt-counts. We consider different ratios of sampling within these four populations and observe the change in number of new mutations that would be observed for a fixed total number of new samples. }
  \label{fig:4pop_words2}
\end{figure}

\end{document}